\documentclass[12pt]{article}
\usepackage[margin=1.1in]{geometry}
\usepackage[utf8]{inputenc}

\usepackage{amsmath,amsthm}
\usepackage{amsfonts,amssymb,bm}
\usepackage{subfigure}
\usepackage{graphicx}
\usepackage{caption}
\usepackage{epstopdf}
\usepackage{url}
\usepackage{listings}
\usepackage{xcolor}
\usepackage{hyperref}
\usepackage[capitalize,noabbrev]{cleveref}

\usepackage{natbib}  
\bibpunct{(}{)}{;}{a}{,}{,}

\theoremstyle{plain}
\newtheorem{theorem}{Theorem}
\newtheorem{proposition}[theorem]{Proposition}
\newtheorem{lemma}[theorem]{Lemma}
\newtheorem{corollary}[theorem]{Corollary}
\newtheorem{assp}[theorem]{Assumption}
\theoremstyle{definition}
\newtheorem{definition}[theorem]{Definition}
\theoremstyle{remark}
\newtheorem{remark}[theorem]{Remark}

\usepackage{algorithm}
\usepackage{algorithmic}

\usepackage{appendix}
\usepackage{braket}
\usepackage{multirow}
\usepackage{stmaryrd} 

\usepackage{authblk}

\usepackage{lineno}
\usepackage{tikz}
\usetikzlibrary{shapes.multipart,calc,fit}
\usetikzlibrary{matrix}

\DeclareMathOperator*{\argmin}{arg\,min}

\DeclareMathOperator*{\trace}{tr}

\DeclareMathOperator*{\covv}{cov}
\DeclareMathOperator*{\varr}{var}
\DeclareMathOperator*{\diag}{diag}

\DeclareMathOperator{\prox}{prox}
\DeclareMathOperator{\expe}{\mathbb{E}}

\newcommand{\reals}{\mathbb{R}}
\newcommand{\fro}[1]{\|#1\|_{\mathrm{F}}}       
\newcommand{\maxn}[1][X]{\|#1\|_{\mathrm{max}}}

\newcommand{\twopartdef}[4]
{\left\{
    \begin{array}{ll}
      #1 & \mbox{if } #2 \\
      #3 & \mbox{ } #4
    \end{array}
  \right.
}

\newcommand{\dop}{\mathrm{D}}

\newcommand{\trs}[1][x]{{#1}^{\mathrm{T}}}
\newcommand{\mc}[1]{\mathcal{#1}}
\newcommand{\grp}{G}
\newcommand{\gre}{E}
\newcommand{\grv}{V}

\DeclareMathOperator{\pa}{\mathrm{PA}}

\newcommand{\dom}[1][d]{\reals^{#1\times #1}} 
 
\DeclareMathOperator{\degr}{deg}  
\DeclareMathOperator{\nnz}{nnz}
\def\X{\mbox{\bf X}}
\def\E{\mbox{\bf N}}
\newcommand{\signoise}{{\Omega}}
\newcommand{\spd}{\mathcal{S}^{d}}   
\newcommand{\offd}[1]{#1_{\mathrm{off}}}

\DeclareMathOperator*{\mora}{\mathcal{M}}
\DeclareMathOperator*{\suppo}{E}

\newcommand{\doms}[1][\Theta]{E_{#1}}

\newcommand{\symm}{\mathrm{Sym}}

\newcommand{\eg}{e.g.}
\newcommand{\ie}{i.e.}

\newcommand{\er}{\mathrm{ER}}
\newcommand{\sff}{\mathrm{SF}}
\newcommand{\gau}{\mathrm{Gaussian}}
\newcommand{\expp}{\mathrm{Exponential}}

\newcommand{\sset}[1][\Theta]{\mathcal{S}(#1)}
\newcommand{\ind}{\perp\!\!\!\!\perp}

\newcommand{\augl}{\tilde{L}^{\rho}}
\newcommand{\subc}{\doms} 
\newcommand{\gd}{\mathrm{grad}_{\subc}}
\newcommand{\iter}[2]{{#1}_{{#2}}}
\newcommand{\ttau}{\tau_{\mathrm{98}}} 

\newcommand{\ellzero}[1]{ \|#1\|_{\ell_0}} 
\newcommand{\ellone}[1]{\|#1\|_{\ell_1}} 
\newcommand{\Duu}{D_{\mathrm{u}}} 

\newcommand{\sem}{SEM} 
\newcommand{\lingam}{LiNGAM}
\newcommand{\notears}{NOTEARS}
\newcommand{\golem}{GOLEM}
\newcommand{\ogho}{GH18}

\newcommand{\ourmo}{LoRAM} 
\newcommand{\ours}{ICID}
\newcommand{\XX}{ICID}
\newcommand{\XY}{$\mathcal{O}$-ICID}
\def\OXX{{$\mathcal{O}$-ICID}}

\def\XSkew{\XX\ (skew)}

\usepackage{color} 

\newcommand{\rev}[1]{#1} 
\newcommand{\revn}[1]{#1}

\newcommand{\comm}[1]{}
\newcommand{\checkk}[1]{}
\newcommand{\bleu}[1]{{\color{blue} #1}}

\usepackage{multicol}

\usepackage[normalem]{ulem}
\newcommand\soutpars[1]{\let\helpcmd\sout\parhelp#1\par\relax\relax}
\long\def\parhelp#1\par#2\relax{%
  \helpcmd{#1}\ifx\relax#2\else\par\parhelp#2\relax\fi%
  }
\renewcommand{\comm}[1]{}
\renewcommand{\bleu}[1]{#1}
\renewcommand{\checkk}[1]{}

\title{Learning Large Causal Structures from Inverse Covariance Matrix via Sparse Matrix Decomposition}

\author[*]{Shuyu Dong}
\author[$\dag$]{Kento Uemura}
\author[$\dag$]{Akito Fujii}
\author[$\dag$]{Shuang Chang}
\author[$\dag$]{Yusuke~Koyanagi}
\author[$\dag$]{Koji Maruhashi}
\author[*]{Mich\`ele Sebag}

\affil[*]{\small\it LISN, INRIA, Universit\'e Paris-Saclay} 
\affil[ ]{\small first.last@inria.fr} 

\affil[$\dag$]{\small\it Fujitsu Laboratories Ltd.} 
\affil[ ]{\small first.last@fujitsu.com} 

\date{\today} 

\begin{document}
\maketitle

\begin{abstract}
Learning causal structures from observational data is a fundamental problem
facing important computational challenges when the number of variables is
large. In the context of linear structural equation models (SEMs), this paper
focuses on learning causal structures from the inverse covariance matrix. The
proposed method, \rev{called \XX\ for Independence-preserving Decomposition from
Inverse Covariance matrix}, is based on continuous optimization of a
matrix decomposition model that preserves the nonzero patterns of the inverse
covariance matrix. Through theoretical and empirical evidences, we show that
\XX\ efficiently identifies the sought directed acyclic graph (DAG) assuming
the knowledge of noise variances. Moreover, \XX\ is shown empirically to be
robust under bounded misspecification of noise variances in the case where the
noise variances are non-equal. The proposed method enjoys a low complexity, as reflected by its time efficiency in the experiments, and also enables a novel regularization scheme that yields highly accurate solutions on the Simulated fMRI data \citep{smith2011network} in comparison with state-of-the-art algorithms. 
\end{abstract}


\section{Introduction}
Discovering causal relations from observational data emerges as an important problem for artificial intelligence~\citep{Pearl2000,peters2017elements} with fundamental and practical motivations. 
One notable reason is that causal models support modes of reasoning, \eg, counterfactual reasoning and algorithmic recourse \citep{WshopAlgorithmicRecourseICML21}, that are otherwise out of reach by correlation-based machine learning, as shown by \cite{peters2016causal,arjovsky2019invariant,Sauer2021ICLR}.

The learning of causal structures from data, also referred to as causal discovery, faces challenges in both statistical and algebraic aspects, since one needs to uncover not only correlations from data but also the underlying causal directions. %
In addition to difficulties related to a restricted number $n$ of observational samples hindering the estimation process, 
learning a directed acyclic graph (DAG) is NP-hard~\citep{chickering1996learning} even in the large sample limit, as the search space of DAGs increases super-exponentially with respect to the number $d$ of variables.

\paragraph{Related work.} A usual strategy to overcome computational and other difficulties is to constraint the DAG search \citep{spirtes2000causation,meek1995causal,chickering2002learning,loh2014high,solus2021consistency}. 
In the linear structure equation model (SEM), \cite{loh2014high} show that the
moral graph of the DAG coincides with the support graph
of the inverse covariance matrix under a mild faithfulness
assumption. 
This result entails a reduced search of DAGs within the support of the
inverse covariance matrix with respect to the log-likelihood score function involving the diagonal of noise variances $\Omega$. 
Furthermore, when the true $\Omega$ is provided, the score function admits the true DAG as the unique minimum \citep{loh2014high}. 
Note however that even when $\Omega$ is known, the actual minimum of score function is unknown, requiring a thorough search.
  
Another strategy for causal discovery is to formulate a continuous optimization
problem based on a differentiable or subdifferentiable (\eg, $\ell_1$-penalty)
score
function~\citep{NEURIPS2018_e347c514,aragam2019globally,ng2020role,ng2021reliable,lopez2022large}. 
In this line of work, the optimization of the causal model, defined on the set
of real square matrices, is subject to a differentiable DAG
constraint~\citep{NEURIPS2018_e347c514}, or alternatively, a sparsity-promoting
constraint on the sought matrix~\citep{ng2020role}. The computational cost of these optimization approaches, different from combinatorial methods, depends mainly on the gradient computations which scale well enough; %
but the overall complexity may still be high due to the nonconvex optimization landscape (see \eg, \citep{NEURIPS2018_e347c514,aragam2019globally}). %

\paragraph{Contributions.}
In this paper, we present a matrix decomposition approach aimed at getting the best of both above strategies. %
In the linear SEM setting, the inverse covariance matrix $\Theta$ is related to the parameters $(B,\Omega)$ of the SEM as: 
\begin{align} %
    \label{eq:dec}
  \Theta = (I-B)\Omega^{-1}\trs[(I-B)] %
\end{align}
where $B$ is the sought causal structure and $\Omega$ is the diagonal matrix of noise variances. 
Unfortunately, the above decomposition of $\Theta$ is not unique. We show \bleu{that the sought DAG $B$ can be 
uniquely identified (among all feasible DAGs) through this decomposition given the true noise variance matrix $\Omega$.} While 
this result takes inspiration from \cite{loh2014high}, a main
difference is that we use the exact matrix equation \eqref{eq:dec} 
for computing eligible matrix decompositions. 

As the found solutions are \bleu{not necessarily} DAGs, the underlying
optimization is augmented with an $\ell_1$-penalty as a sparsity-promoting term, since sparsity is shown to be an effective constraint for learning DAGs under mild assumptions on the true causal structure~\citep{raskutti2018learning,aragam2019globally,ng2020role}. %
\revn{We show that, under \cite[Assumptions 1--3]{ng2020role}, the $\ell_0$~minimization subject to the matrix equality~\eqref{eq:dec}, achieves exact learning of linear SEMs with equal noise variances.}
\rev{To leverage continuous optimization, we formalize this sparse matrix decomposition problem, under the name of \XX---for Independence-preserving Decomposition from Inverse
Covariance---}as an equality-constrained $\ell_1$-minimization problem analogous to the Dantzig selector~\citep{candes2007dantzig}, and we tackle this problem by an augmented Lagrangian method (ALM). 
\revn{The computational efficiency of \XX\ is largely explained by our choice of restricting the nonzeros of the candidate matrix $B$ within the support of $\Theta$, 
while the feasibility of the sought DAG is shown to be preserved.} 
More precisely, we show that the cost of the gradient computation in \XX\ scales as $O(kd^2)$ for causal models with a bounded Markov blanket size $k$, notably improving over \notears~\citep{NEURIPS2018_e347c514} and \golem~\citep{ng2020role}. %

\bleu{Experiments (\cref{sec:exps}) are conducted to empirically assess the performance of \XX\ in causal discovery tasks compared to state-of-art methods, which show significant speedups at the expense of a moderate loss of accuracy in the linear SEM setting with equal variances (EV).} 
\rev{Performances of \XX\ in learning from oracle inverse covariances are also examined, first in the EV case, and then in a challenging non-equal variance (NV) case (\eg, \citep{ReisachSW21}) with misspecifications of noise variances.} 
\revn{Furthermore, we experiment with the Simulated fMRI dataset \citep{smith2011network}. A regularization term using skewness-based measures~\citep{hyvarinen2013pairwise} is proposed and applied to \XX. The results show that the regularized \XX\ attains highly accurate solutions, besides Direct\lingam~\citep{shimizu2011directlingam,hyvarinen2013pairwise}, and at the same time achieves considerable speedups over state-of-the-art methods. }
%

\section{Background} 
\label{sec:bg} 

\subsection{Definitions and notation} 
A graph $\grp:=(\grv,\gre)$ consists of a set of nodes $\grv$ and a set of edges $\gre\subset \grv\times \grv$. Unless specified otherwise, all graphs (respectively, edges) are directed. 
The binary adjacency matrix of a graph $\grp$ is such that its $(i,j)$-th entry is $1$ if and only if $(i,j)\in\gre$. Conversely, any matrix $B\in\dom$ determines an unique edge set through the nonzero entries, $\suppo(B):=\{(i,j): B_{ij} \neq 0\}$, which we also refer to as the support of $B$. From the definition of edge set $\suppo(B)$, we assume by convention that all graphs are ordered (according to the indexing of matrix $B$). 
The number of nonzeros of matrix $B$ is denoted as \rev{$\ellzero{B}$} or $\nnz(B)$ indifferently. 
Given a directed acyclic graph (DAG) $\grp$, the moralization of $\grp$ is defined as the undirected graph $\mora(G)$, with all directed nodes in $\grp$ made undirected, and new undirected edges $(i,j)$ created for all pairs $(i,j)$ parents of a same node~$k$.
An undirected graph is a {\it chordal graph} if every cycle of length
greater than three has a chord, \ie, a shortcut that triangulates the cycle. 
\rev{For simplicity, a matrix is called a DAG (respectively, chordal) by abuse if its
support graph is a DAG (resp. chordal).} 
The set of $d\times d$ symmetric positive definite matrices is denoted by %
$\spd_{++}$. The positive definiteness of a symmetric matrix $\Theta$ is denoted as $\Theta\succ 0$.

\subsection{Structural equation models}
\label{ssec:rel-loh14}
Structural equation models are defined on a set of random variables $\X=(X_1,\dots,X_d)$. 
A linear SEM $(B, \signoise)$ expresses the causal relations among the variables as: 
\begin{align}
    \label{eq:sem}
\X = \trs[B]\X + \E
\end{align} 
where matrix $B\in\dom$ is supported on a DAG $\grp$ %
and $\E=(\epsilon_1,\dots,\epsilon_d)$ is a vector of $d$ noise variables. 
Here, the variables $\X$ and $\E$ are assumed to be centered and that $\epsilon_{j}\ind X_i$ for all
$i\in\pa_j^{\grp}$ where $\pa_i^{\grp}$ is the set of parent nodes of $X_i$ in $\grp$. 
The joint distribution $P_{\footnotesize\X}$ satisfies the Markov property with respect to $\grp$, and its density function can be factorized as $p({\bf x}) = \Pi_{i=1}^d
p\big(x_i|x_{\pa_i^{\grp}}\big)$.  

Since $\X$ and $\E$ are centered and $\epsilon_{j}\ind X_i$ for all
$i\in\pa_j^{\grp}$, %
the covariance of $\E$ is a diagonal matrix 
$\signoise:=\diag(\omega_1^2, \ldots, \omega_d^2)$, where $\omega_i^2$ denotes
the variance of $\epsilon_i$. The linear SEM~\eqref{eq:sem} with the DAG $B$ entails 
that %
$\X = (I-B)^{-1}\E$ since $(I-B)$ is invertible.   
Consequently, the covariance matrix of $\X$ is 
$\covv(\X) = (I-B)^{-\mathrm{T}} \signoise (I-B)^{-1}$, and the inverse
covariance matrix (or the precision matrix) of $\X$ reads:   
\begin{align} 
    \label{eq:prec-b}
\Theta := \varphi(B,\signoise^{-1}) = (I-B) \signoise^{-1} \trs[(I-B)]. 
\end{align} 
The property above yields the following result. 
As \Cref{thm:loh14-thm2} shows, the nonzero patterns (or edge set
$\suppo(\Theta)$) of the inverse covariance matrix is in fact a subset of the
moralized graph of $B$. 

\begin{lemma}[{\cite{loh2014high}}]
    \label[lemma]{lemm:loh14-lemm1} 
Let $\X$ be a random variable following the SEM~\eqref{eq:sem} of $(B, \signoise)$. 
Then the coefficients of the inverse covariance matrix $\Theta$ of $\X$ are as
follows, for all $i$ and $j \neq i$ in %
$[d]$: 
$\Theta_{ij} = -\frac{B_{ij}}{\omega_{j}^{2}}  - \frac{B_{ji}}{\omega_{i}^{2}} + \sum_{\ell=1}^d \frac{B_{i\ell}B_{j\ell}}{\omega_{\ell}^{2}}$, 
and 
$\Theta_{ii} = \frac{1}{\omega_{i}^{2}}  + \sum_{\ell=1}^d \frac{B_{i\ell}^2}{\omega_{\ell}^{2}}$. 
\end{lemma}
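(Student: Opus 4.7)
The statement is a direct entry-wise expansion of the matrix identity
$\Theta = (I-B)\,\Omega^{-1}\,(I-B)^{\mathrm{T}}$, so my plan is a straightforward computation rather than anything structural.

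First, I would introduce $M := I - B$, so that $M_{ij} = \delta_{ij} - B_{ij}$, where $\delta_{ij}$ is the Kronecker symbol. Since $\Omega^{-1} = \diag(\omega_1^{-2},\dots,\omega_d^{-2})$, the general entry of $\Theta$ reads
\begin{equation*}
\Theta_{ij} \;=\; \sum_{\ell=1}^{d} M_{i\ell}\,\omega_\ell^{-2}\,M_{j\ell}
\;=\; \sum_{\ell=1}^{d} \frac{(\delta_{i\ell}-B_{i\ell})(\delta_{j\ell}-B_{j\ell})}{\omega_\ell^{2}}.
\end{equation*}

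Next, I would expand the product in the summand into four terms and split the two cases. For $j \neq i$, the contribution of $\delta_{i\ell}\delta_{j\ell}$ vanishes (no $\ell$ hits both $i$ and $j$), the term $\delta_{i\ell}B_{j\ell}/\omega_\ell^2$ collapses to $B_{ji}/\omega_i^2$, the term $\delta_{j\ell}B_{i\ell}/\omega_\ell^2$ collapses to $B_{ij}/\omega_j^2$, and the cross term $B_{i\ell}B_{j\ell}/\omega_\ell^2$ remains as the sum over $\ell$. Combining signs yields \eqref{eq:loh14-lemm1-a}. For $j = i$, the term $\delta_{i\ell}^2$ contributes $1/\omega_i^2$, the two mixed terms contribute $-2B_{ii}/\omega_i^2$, and $B_{i\ell}^2/\omega_\ell^2$ gives the remaining sum. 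Here I would invoke that $B$ is supported on a DAG, hence has zero diagonal ($B_{ii} = 0$), so the cross terms vanish and \eqref{eq:loh14-lemm1-b} follows.

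There is no real obstacle: the only subtlety is to remember the DAG-induced zero diagonal of $B$ when handling the diagonal entries, without which the $-2B_{ii}/\omega_i^2$ cross term would spuriously appear in \eqref{eq:loh14-lemm1-b}. Everything else is bookkeeping on the four-term expansion of $(\delta_{i\ell}-B_{i\ell})(\delta_{j\ell}-B_{j\ell})$.
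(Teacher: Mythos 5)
Your computation is correct and is exactly the argument the paper implies: the lemma is stated as the entrywise summary of the matrix identity $\Theta=(I-B)\Omega^{-1}\trs[(I-B)]$ in \eqref{eq:prec-b}, and your four-term expansion of $(\delta_{i\ell}-B_{i\ell})(\delta_{j\ell}-B_{j\ell})/\omega_\ell^2$, together with the observation that $\diag(B)=0$ because $B$ is supported on a DAG, is precisely the bookkeeping needed. Nothing is missing.
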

\begin{theorem}[{\cite{loh2014high}}] 
    \label{thm:loh14-thm2}
    The inverse covariance matrix $\Theta$~\eqref{eq:prec-b} reflects the
    graph structure of the moralization $\mora(B)$ through inclusion: for
    $i\neq j$, $(i,j)$ is an edge in $\mora(B)$ if $\Theta_{ij} \neq 0$.
\end{theorem}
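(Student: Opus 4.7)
The statement is essentially the contrapositive of a direct computation: if $(i,j)$ is not an edge in the moralization $\mora(B)$, then all the terms contributing to $\Theta_{ij}$ in the formula from \Cref{lemm:loh14-lemm1} must vanish. My plan is to prove the contrapositive: assume $(i,j) \notin \mora(B)$ (with $i\neq j$) and deduce $\Theta_{ij} = 0$ using the closed-form expression~\eqref{eq:loh14-lemm1-a}.

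First, I would unpack the definition of moralization. By definition, $(i,j) \in \mora(B)$ if either (a) $(i,j) \in \suppo(B)$, i.e.\ $B_{ij} \neq 0$, or (b) $(j,i) \in \suppo(B)$, i.e.\ $B_{ji} \neq 0$, or (c) there exists some $\ell \in \llbracket d \rrbracket$ such that both $(i,\ell)$ and $(j,\ell)$ belong to $\suppo(B)$, i.e.\ $B_{i\ell}\neq 0$ and $B_{j\ell}\neq 0$ (so that $i$ and $j$ are co-parents of $\ell$). Therefore, assuming $(i,j) \notin \mora(B)$ yields three simultaneous conditions: $B_{ij} = 0$, $B_{ji} = 0$, and for every $\ell$, the product $B_{i\ell} B_{j\ell} = 0$.

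Second, I would plug these three conditions into the expression for $\Theta_{ij}$ given in \eqref{eq:loh14-lemm1-a}:
\begin{equation*}
\Theta_{ij} = -\frac{B_{ij}}{\omega_{j}^{2}} - \frac{B_{ji}}{\omega_{i}^{2}} + \sum_{\ell=1}^{d}\frac{B_{i\ell}B_{j\ell}}{\omega_{\ell}^{2}}.
\end{equation*}
The first two terms vanish since $B_{ij} = B_{ji} = 0$, and every summand in the last term vanishes because $B_{i\ell} B_{j\ell} = 0$ for all $\ell$. Hence $\Theta_{ij} = 0$, which contradicts the hypothesis $\Theta_{ij} \neq 0$ and establishes the inclusion.

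There is no real obstacle here; the theorem is essentially a direct corollary of \Cref{lemm:loh14-lemm1} combined with the definition of moralization. The only subtle point worth mentioning is that the noise variances $\omega_\ell^2$ are strictly positive, so that the vanishing of $\Theta_{ij}$ truly follows from the vanishing of the \emph{numerators} in each term rather than from accidental cancellation among nonzero terms with different signs; this is why the inclusion is guaranteed unconditionally on the magnitudes of the nonzero entries of $B$.
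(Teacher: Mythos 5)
Your proof is correct and matches the route the paper itself indicates: \Cref{thm:loh14-thm2} is presented as a direct consequence of \Cref{lemm:loh14-lemm1}, and taking the contrapositive of the moralization definition to kill each term of \eqref{eq:loh14-lemm1-a} is exactly the intended argument. No gaps; your closing remark about the positive variances is a reasonable (if inessential) observation, since each summand vanishes individually once its numerator does.
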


The converse of \Cref{thm:loh14-thm2} is not always true but is considered as a
mild assumption (\Cref{assp:loh14-assp1}), stating that any edge $(i,j)$ in $\Theta$ yields either a
directed edge, %
or the existence of a common child, between $i$ and $j$ in $B$. 
This condition is a type of faithfulness assumption~\cite{koller2009probabilistic,spirtes2000causation}. %

\begin{assp}[{\cite{loh2014high}}] 
    \label[assp]{assp:loh14-assp1}  
Let $\X$ be a random variable following the SEM of $(B, \signoise)$. The inverse covariance matrix $\Theta$~\eqref{eq:prec-b} of $\X$ satisfies, for all $i\neq j$, 
$ \Theta_{ij} = 0$
only if $B_{ji}=B_{ij} = 0$ and $B_{i\ell}B_{j\ell}=0$ for all $\ell$. 
\end{assp}

\section{Independence-preserving decomposition of the precision matrix}
\label{sec:main-res}

Taking inspiration from \cite{loh2014high,ghoshal2018learning}, 
we consider the causal discovery problem with the linear SEM in a two-step
framework: i) estimating the inverse covariance matrix $\Theta$~\eqref{eq:prec-b} of $\X$ from observational data (statistical part); ii) recovering a causal structure matrix $B$ from $\Theta$ (structural learning part). 
We study the second, structural learning part assuming $\Theta$ is given by an oracle or estimated from observational data.

\subsection{Matrix decomposition within a sparse support}
\label{ssec:main-res}

Given an inverse covariance matrix $\Theta$, the matrix equation~(Eq. \ref{eq:prec-b}) generally admits multiple DAG solutions, %
and the set of solutions becomes even larger without the DAG constraint on $B$. 
From the %
results in \Cref{sec:bg}, however, it is possible to reduce the vast solution set %
by considering a restriction on the candidate support graph. \Cref{assp:loh14-assp1} ensures that the non-zero pattern (or edge set) of $\Theta$ coincides with the moralization of $B$. This allows us to define a matrix decomposition model more specific than \eqref{eq:prec-b}, which we call a {\it support-constrained} decomposition.

\begin{definition}%
    \label[definition]{def:idecomp}
Let $\Theta\in\spd_{++}$ be a positive definite matrix. The set $\sset(\Theta)$ is defined as the set of pairs of matrices $(B,D)$, where $B\in\dom$ and $D\succ 0$ is a strictly positive diagonal
matrix, such that:
\begin{align}\label{def:set-idecomp}
\sset[\Theta]:=\big\{(B,D): \Theta=(I-B)D\trs[(I-B)], \diag(B)=0, ~D\succ 0, 
    \text{~and~} \suppo(B)\subset\suppo({\Theta})\big\}.  
\end{align}
\end{definition}
A pair $(B,D)$ realizes a {\em support-constrained} decomposition of $\Theta$ if it belongs to
$\sset[\Theta]$. 
The domain of $B$ corresponding to the support constraint is denoted as 
\begin{align}
    \label{def:subsp} 
    \doms:= \{ B \in \dom: \diag(B)=\bm{0}, ~ \suppo(B)\subset \suppo(\Theta) \}. 
\end{align} 
The support-constrained decomposition is always well-defined in
the case %
of chordal matrices. In fact, the set $\sset[\Theta]$ is nonempty when $\Theta\succ 0$ is supported on a chordal graph; see \Cref{prop:main-1} (in \Cref{app:prfs-31}). 

In the general case where $\Theta$ is not a chordal matrix, then  $\sset$ contains the sought solution under \Cref{assp:loh14-assp1}: %
\begin{proposition} 
    \label[proposition]{prop:thm4} 
Let $\Theta$ be the inverse covariance matrix of $\X$ obeying the linear SEM with $(B,\signoise)$, \ie, $\Theta=\varphi(B,\signoise)$~\eqref{eq:prec-b}. 
Suppose that this SEM satisfies \Cref{assp:loh14-assp1}, then the set $\sset[\Theta]$ contains $(B,\Omega^{-1})$. 
\end{proposition}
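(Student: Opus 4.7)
The plan is to verify directly that the pair $(B, \Omega^{-1})$ satisfies each of the four defining conditions of $\sset[\Theta]$ in~\eqref{def:set-idecomp}. I expect three of the four conditions to follow immediately from the SEM setup, leaving the support-containment condition as the only one that genuinely uses the faithfulness hypothesis.

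First, I would set $D := \Omega^{-1}$ and check the algebraic conditions. The matrix equation $\Theta = (I-B)D(I-B)^{\mathrm{T}}$ is simply the definition $\Theta = \varphi(B,\Omega^{-1})$ from~\eqref{eq:prec-b}, so this holds by hypothesis. Next, $\diag(B) = 0$ holds because $B$ is supported on a DAG, which has no self-loops. Finally, $D = \Omega^{-1} = \diag(1/\omega_1^2, \ldots, 1/\omega_d^2)$ is diagonal with strictly positive entries (since the noise variances $\omega_i^2$ are positive), so $D \succ 0$.

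The substantive step is to establish $\suppo(B) \subset \suppo(\Theta)$. Here I would invoke the contrapositive of \Cref{assp:loh14-assp1}: that assumption asserts that, for $i \neq j$, $\Theta_{ij} = 0$ implies in particular $B_{ij} = 0$, so conversely $B_{ij} \neq 0$ forces $\Theta_{ij} \neq 0$. Since $\diag(B) = 0$, every index $(i,j) \in \suppo(B)$ satisfies $i \neq j$, so the contrapositive applies uniformly to all such pairs and yields $(i,j) \in \suppo(\Theta)$, completing the inclusion.

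The main obstacle is minimal: this proposition is essentially a bookkeeping statement that unpacks Assumption~\ref{assp:loh14-assp1} in the right direction. The only point where I would be attentive is to record that $\suppo(B)$ contains no diagonal entries (thanks to $\diag(B) = 0$), so that the restriction to $i \neq j$ in Assumption~\ref{assp:loh14-assp1} never causes a gap in the argument.
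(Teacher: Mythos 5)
Your proposal is correct and follows essentially the same route as the paper's proof: the algebraic conditions hold by construction, and the support inclusion $\suppo(B)\subset\suppo(\Theta)$ is obtained by taking the contrapositive of \Cref{assp:loh14-assp1} (which the paper phrases as $\Theta_{ij}\neq 0$ whenever $(i,j)$ is in the moralization of $B$, hence in particular whenever $B_{ij}\neq 0$ or $B_{ji}\neq 0$). Your explicit check of the remaining conditions in \eqref{def:set-idecomp} and the remark about diagonal entries are just slightly more detailed bookkeeping than what the paper records.
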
  
The proof to this proposition, and the proof of all results in this section are given in \Cref{app:prfs}. 

\rev{In addition to the sought solution $(B,\Omega)$, 
$\sset[\Theta]$ also contains pairs ($B',\Omega')$ for 
$B'$ Markov equivalent to $B$. \cite{ghassami2020characterizing} shows the specific rotations $Q$ that realize the transformation from $(I-B)$ to $(I-B')$ and $\Omega$ to $\Omega'\succ 0$ such that $\varphi(B',\Omega')=\varphi(B,\Omega)$. %
Moreover, for any $B'$ Markov equivalent to $B$, $\mora(B)=\mora(B')$ which coincides with $\suppo(\Theta)$. Hence $B'$ also satisfies the support constraint regarding the set \eqref{def:set-idecomp}.} 
In particular, we show next that any DAG $B$ such that $(B,\Omega^{-1})$ belongs to the set~\eqref{def:set-idecomp} with the true noise variances matrix $\Omega$, is the sought true causal structure: 

\begin{theorem}
    \label{prop:main-2}
Let $(B^\star,\signoise^\star)$ be a linear SEM satisfying \Cref{assp:loh14-assp1} and let
$\Theta=\varphi(B^\star,\signoise^\star)$ be the inverse covariance of $\X$ following this SEM. 
If a DAG $B \in \dom$ is such that $(B,{\Omega^\star}^{-1}) \in \sset[{\Theta}]$, then $B=B^\star$.
\end{theorem}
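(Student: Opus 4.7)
The plan is to proceed by induction on the dimension $d$, reducing the claim at each step by peeling off a sink of $B^\star$. Setting $\Omega = \Omega^\star$ for brevity, the hypothesis $(B, \Omega^{-1}) \in \sset[\Theta^\star]$ combined with $\Theta^\star = (I-B^\star)\Omega^{-1}(I-B^\star)^T$ gives the matrix identity
\[
(I-B)\,\Omega^{-1}(I-B)^T \;=\; (I-B^\star)\,\Omega^{-1}(I-B^\star)^T.
\]
The base case $d=1$ is immediate, since a $1\times 1$ matrix with zero diagonal is the zero matrix. For the inductive step, I would pick a sink $i$ of the DAG on which $B^\star$ is supported, so that row $i$ of $B^\star$ vanishes.

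The coefficient formulas of \Cref{lemm:loh14-lemm1} hold purely algebraically whenever $\Theta = (I-B)\Omega^{-1}(I-B)^T$, so I apply them to both $B^\star$ and $B$. At the diagonal $(i,i)$, the $B^\star$ side gives $\Theta^\star_{ii} = 1/\omega_i^2$ (because row $i$ of $B^\star$ is zero), while the $B$ side gives $\Theta^\star_{ii} = 1/\omega_i^2 + \sum_\ell B_{i\ell}^2/\omega_\ell^2$. Equating forces $B_{i\ell} = 0$ for every $\ell$, so row $i$ of $B$ is also zero. With both rows now zero, the off-diagonal formula at $(i,j)$ collapses to $\Theta^\star_{ij} = -B^\star_{ji}/\omega_i^2 = -B_{ji}/\omega_i^2$, so the $i$-th column of $B$ matches the $i$-th column of $B^\star$.

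Having matched row and column $i$, I would expand both sides of the matrix identity as $\sum_k \omega_k^{-2}\,(I-B)_{:,k}(I-B)_{:,k}^T$ and observe that the $k=i$ rank-one term coincides on the two sides. For $k \neq i$, the columns $(I-B)_{:,k}$ and $(I-B^\star)_{:,k}$ both carry a zero in position $i$ (because rows $i$ of $B$ and $B^\star$ vanish), so the equation restricts cleanly to the $(-i,-i)$ principal submatrix and becomes
\[
(I' - B_{-i,-i})\,D'\,(I' - B_{-i,-i})^T \;=\; (I' - B^\star_{-i,-i})\,D'\,(I' - B^\star_{-i,-i})^T,
\]
where $I' = I_{d-1}$ and $D'$ is $\Omega^{-1}$ with its $i$-th diagonal entry removed. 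The submatrices $B_{-i,-i}$ and $B^\star_{-i,-i}$ are DAGs with zero diagonal, so the inductive hypothesis yields $B_{-i,-i} = B^\star_{-i,-i}$; combined with the matching of row and column $i$, this gives $B = B^\star$.

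The main subtlety is ensuring the descent to dimension $d-1$ is valid, which hinges on the observation that once row $i$ of $B$ is forced to vanish, the off-diagonal columns automatically carry a zero in position $i$ — exactly what allows the rank-one expansion to restrict to an equation of the same form in one fewer dimension. I note also that \Cref{assp:loh14-assp1} does not enter this uniqueness argument directly; it is used only upstream through \Cref{prop:thm4} to ensure that $(B^\star, \Omega^{-1})$ belongs to $\sset[\Theta^\star]$ in the first place.
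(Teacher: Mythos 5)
Your proof is correct, but it takes a genuinely different route from the paper. The paper whitens the noise by the change of variables $\tilde{B}=\Omega^{\star\,1/2}B\,\Omega^{\star\,-1/2}$, reducing the identity $(I-B)\Omega^{\star -1}(I-B)^{\mathrm{T}}=(I-B^\star)\Omega^{\star -1}(I-B^\star)^{\mathrm{T}}$ to $(I-\tilde{B})(I-\tilde{B})^{\mathrm{T}}=(I-\tilde{B}^\star)(I-\tilde{B}^\star)^{\mathrm{T}}$, and then invokes \cite[Lemma~25]{loh2014high} as a black box (two matrices permutation-similar to unit lower-triangular matrices with equal Gram matrices coincide). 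You instead give a self-contained induction that peels off a sink of $B^\star$ using the entrywise formulas of \Cref{lemm:loh14-lemm1}: the diagonal equation at a sink forces the corresponding row of $B$ to vanish, the off-diagonal equation then pins down the corresponding column, and the rank-one expansion restricts the identity to the $(d-1)$-dimensional principal submatrix. Your argument is in effect an elementary proof of (a diagonally weighted generalization of) the cited lemma, so it buys self-containedness at the cost of a slightly longer write-up; to be fully rigorous you should state the induction hypothesis as the pure matrix claim ``$(I-B)D(I-B)^{\mathrm{T}}=(I-B^\star)D(I-B^\star)^{\mathrm{T}}$ with $D\succ 0$ diagonal, $\diag(B)=\diag(B^\star)=0$ and $B^\star$ a DAG implies $B=B^\star$,'' since that, rather than the theorem statement itself (which carries the SEM, the support constraint and \Cref{assp:loh14-assp1}), is what descends to dimension $d-1$. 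A minor bonus of your route, worth noting, is that acyclicity of $B$ is never used — only $\diag(B)=0$ and acyclicity of $B^\star$ enter — whereas the paper's appeal to \cite[Lemma~25]{loh2014high} requires both matrices to be permutation-similar to unit lower-triangular matrices. Your closing remark that \Cref{assp:loh14-assp1} plays no role in the uniqueness argument is also accurate and consistent with the paper's proof.
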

\rev{The result of \Cref{prop:main-2}, similar to \cite[Theorem 7]{loh2014high}, requires the knowledge of the noise variances.  
\cite[Theorem 7]{loh2014high} supported causal discovery through a DAG enumeration approach. 
In our case, \Cref{prop:main-2} supports a causal discovery approach based on a novel sparse matrix decomposition method. 
}

\subsection{\XX\ model learning via constrained optimization}

\revn{\Cref{prop:main-2} suggests searching for a DAG that realizes the support-preserving matrix decomposition of $\Theta$. 
One way to tackle this challenging task is by relaxing the DAG constraint %
and minimizing the 
number of edges (nonzeros of $B$) subject to the constraint that $\varphi(B,D) = \Theta$:
    \begin{align} 
        \underset{B\in \doms,~ D\succ 0 }{\text{minimize}} ~ \ellzero{B} \quad \text{subject to~~} (I-B)D\trs[(I-B)]=\Theta 
            \label{prog:oicid-l0}
    \end{align}   
where the search space $\doms$ of $B$ %
is defined in \eqref{def:subsp}. %
Noting that the constrained set \eqref{def:set-idecomp} contains at least one DAG
(\Cref{prop:thm4}), the $\ell_0$ minimization 
helps the selection by discarding graphs with cycles, 
since cycles usually implies redundant edges from spurious causal directions. 
The following corollary, deduced from \Cref{prop:main-2} and \cite[Theorem 1]{ng2020role}, 
supports this claim in the case with equal noise variances (EV case) under 
a generalized faithfulness assumption and a causal minimality condition~\citep{ghassami2020characterizing,ng2020role}.  
\begin{corollary}
    \label[corollary]{coro:main-2}
Let $\X$ be a variable following a linear SEM $(B^\star,\Omega^\star)$. Suppose the precision matrix $\Theta$ of $\X$ satisfies the Assumptions 1--2 of \cite{ng2020role}, and that 
the noise variances are equal. Then, %
as long as the skeleton of $B^\star$ does not contain any triangle, the $\ell_0$ minimization~\eqref{prog:oicid-l0} has a unique global optimum $(B,D)$ where $D_{ii} \equiv \min_{j\in[d]}\{\Theta_{jj}\}$ and $B=B^\star$. 
\end{corollary}
}

\revn{To benefit from the $\ell_0$ minimization effects %
while leveraging continuous optimization techniques for matrix decomposition, we resort to the following matrix decomposition problem:
    \begin{align} 
        \underset{B\in \doms,~ D\succ 0 }{\text{minimize}} ~ \ellone{B} \quad \text{subject to~~} 
(I-B)D\trs[(I-B)]=\Theta \label{prog:oicid+eb}
    \end{align}  
where 
$\ellone{B} := \sum_{i,j} |B_{ij}|$ is a continuous relaxation of $\ell_0(B)$  for limiting the number of nonzeros in the solution, and the search space $\doms$ of $B$ is defined in Eq. \ref{def:subsp}.
}

\rev{The $\ell_1$ minimization problem~\eqref{prog:oicid+eb}} %
is analogous to the Dantzig selector~\citep{candes2007dantzig} for high-dimensional statistical problems. 
\rev{Combined with an inverse covariance estimator, this problem~\eqref{prog:oicid+eb} yields 
a method for causal discovery from observational data.} 
\rev{In the context of causal structure learning, we use the term {\it
independence-preserving decomposition} (\XX) %
to refer to %
problem~\eqref{prog:oicid+eb}. 
Note that the locations of the zeros of $\Theta$ encode the independence
relations within $\X$ (in the Gaussian setting, \cite[Remark 3]{loh2014high}),
thus the preservation of the independent pairs after the decomposition, \ie, $\{(i,j):
\Theta_{ij}=0\} \subset \{(i,j): B_{ij}=0\}$, corresponds exactly to the
support constraint $\suppo(B)\subset\suppo(\Theta)$ of the problem
\eqref{prog:oicid+eb}. 
}

\revn{
\paragraph{Additional regularization.}
The \XX\ problem~\eqref{prog:oicid+eb} is deduced from the linear SEM without
restriction on the distributions of the {\bf X} and {\bf N} variables
(\Cref{ssec:rel-loh14}). 
Therefore the \XX\ approach is open to the possibility of additional
regularization schemes depending on the specific properties of the variable and noise distributions. In cases where the data distributions
are skewed, for example, the third-order cumulant statistics carry
important information of the pairwise causal
directions~\citep{hyvarinen2013pairwise}. Taking this inspiration,
we introduce a skewness-based regularization for \XX\ as follows,
\begin{align} 
    & \underset{B\in \doms,~ D\succ 0 }{\text{minimize}} ~ \ellone{B} + \lambda_2 \trace(\trs[\tilde{M}](B\odot B)) \quad \text{subject to~~} (I-B)D\trs[(I-B)]=\Theta, \label{prog:icid-reg} \\ 
    & \hspace{10mm} \text{where ~ ~ } 
    \tilde{M} := - C_{\footnotesize\X} \odot \expe\big[\X\trs[g(\X)] - g(\X)\trs[\X]\big]. \label{eq:reg-matm} 
\end{align}
Here $C_{\footnotesize\X}$ denotes the correlation matrix of $\X$ and $g(\cdot)$
applies to $\X$ elementwisely. For skewed distributions, we take $g(x)= -x^2$ in the
spirit of \cite[Theorem~2]{hyvarinen2013pairwise}; details are given in \Cref{ssec-app:icid-reg}. 

The algorithms described in the next subsection are meant to solve problem \eqref{prog:oicid+eb} and its regularized formulation~\eqref{prog:icid-reg}.
}

\subsection{Algorithm} 
\label{sec:alg-proposed} \label{sec:oicid}
\label{sec:algo}

\revn{
As the \XX\ problems~\eqref{prog:oicid+eb} and~\eqref{prog:icid-reg} are nonconvex and nonsmooth 
(since the feasible set of the matrix equation $\varphi(B,D)=\Theta$ is
nonconvex and the $\ell_1$-term in the objective is nonsmooth), 
we consider the augmented Lagrangian method (ALM)~\citep{Bertsekas1999} to
solve them. %
}
\Cref{alg:oicid} presents the procedure for optimizing the augmented
Lagrangian of \XX. The presented algorithm assumes the equal variance setting ($\Omega$ is a multiple of the identity matrix). Under this assumption, the sought solution $B$ can be retrieved in light of \Cref{coro:main-2}.\footnote{It will be shown experimentally on real-world data that the algorithm can also yield good performances in the general case,
when the equal noise variance assumption is not guaranteed.}

The equality constraint of \XX\ consists of $\frac{d(d+1)}{2}$
equalities. \rev{Let $\Lambda$ be the $d\times d$ matrix of the Lagrange multipliers
for these equalities. Then} the augmented Lagrangian of \XX\ given $D$ is
    \begin{align} 
        L^{\rho} (B; \Lambda) = f(B) + \braket{\Lambda,\Theta-\varphi(B,D)} + \frac{\rho}{2} \fro{\Theta-\varphi(B,D)}^2
        \label{eq:lagr-oicid+e}
    \end{align}
where $f(B) := \ellone{B}$ for problem~\eqref{prog:oicid+eb} and $f(B):=\ellone{B}+r(B)$ 
with $r(B) = \lambda_2 \trace(\trs[\tilde{M}](B\odot B))$ for problem \eqref{prog:icid-reg}.

\begin{algorithm}[htpb]
    \caption{ALM for \XX\ with a fixed diagonal $D$ \label{alg:oicid}}
    \begin{algorithmic}[1]
        \REQUIRE{Inverse covariance matrix $\Theta$, parameters
        $\beta\in(0,1)$, $\rho_0>0$, tolerance $\epsilon>0$} 
        \ENSURE{$B_t\in\doms$}
        \STATE {\bf Initialize}: $B_0\leftarrow \bm{0}$, $\Lambda_0 \leftarrow\bm{0}$, $\rho\leftarrow\rho_0$,
        \revn{and $D_{ii} \leftarrow\min\limits_{j\in[d]}\{\Theta_{jj}\}$ for all $i\in[d]$. 
        }
        \label{algl:init}
\FOR{$t=1,\dots,$}
\STATE \label{algl:primal}
{\bf Primal descent}: for $L^{\rho}(B;\Lambda)$ defined in
\eqref{eq:lagr-oicid+e}, compute 
\hfill \revn{\it \texttt{\#} see \Cref{alg:id-fista} (\Cref{sec-app:fista})} 
\begin{align}\label{prog:primal-oicid}
B_t = \argmin_{B\in \doms} L^{\rho} (B; \Lambda_{t-1})  
\end{align} 
with $\rho \geq \rho_0$ such that $\mc{I}_t:=\fro{\Theta-\varphi(B_t,D)} \leq \beta \mc{I}_{t-1}$ %
        
\STATE {\bf Dual ascent}: \hspace{23mm} 
                $\Lambda_{t} \leftarrow \Lambda_{t-1} + \rho (\Theta-\varphi(B_t,D))$  
\STATE {\bf if} {$\fro{\Theta-\varphi(B_t,D)}\leq \epsilon$} {\bf then}
                \quad return $B_t$ as solution 
\ENDFOR
    \end{algorithmic}
\end{algorithm}

\rev{The efficiency of  \Cref{alg:oicid} mostly depends on the resolution of the primal descent problem (line~\ref{algl:primal}; Eq. \eqref{prog:primal-oicid}). 
For this reason, an adaptation of the FISTA~\citep{beck2009fast} is designed (\Cref{alg:id-fista}, \Cref{sec-app:fista}) according to the structure of the primal problem~\eqref{prog:primal-oicid}. 
The adaptation is in the computation of the gradient
descent stepsize, which is based on an efficient second-order approximation of the line minimization of (the smooth part of) $L^{\rho} (B; \Lambda_{t-1})$. 
Details are given in \Cref{parag:stepsize} (see Eq. \eqref{eq:def-alpha0}).
}

\paragraph{Computational cost.}

The dominant cost of \Cref{alg:id-fista} corresponds to the computation of the gradient $\nabla_B \augl(\cdot,\Lambda)$ at each iteration, where $\augl(\cdot,\Lambda)$ denotes the smooth part of $L^{\rho}(\cdot,\Lambda)$.  
As \Cref{prop:hess-oicid} (\Cref{sec-app:alg}) shows, given the support constraint of $B$, this gradient computation only requires $O(kd^2)$ floating-point operations, where $k$ is the maximal node degree of the graph of $\Theta$ (which equals the maximal Markov blanket size of the true DAG under
\Cref{assp:loh14-assp1}).

This contributes to the efficiency of the proposed algorithm. 
Indeed, as long as the maximal Markov blanket size of the true DAG is upper-bounded by a constant $k$ independent of the graph size $d$, then 
the per-iteration cost of \Cref{alg:id-fista} is bounded by $O(d^2)$, \revn{which is greatly reduced compared to other continuous optimization-based approaches.}

\paragraph{Discussion.}
It is worth noting that \XX\ does not rely on any estimation of a causal ordering for learning matrix $B$, 
as opposed to ordering-based methods \citep{ghoshal2018learning,chen2019causal,gao2022optimal}. 
However, a causal ordering estimated from a solution $B$ of \XX\ can be used in an ordering-based matrix decomposition approach, using \eg, Cholesky decomposition of~$\Theta$ or \citep[Algorithm 1]{ghoshal2018learning}, 
for yielding potentially improved candidate solutions. 
Such a combination of \XX\ with ordering-based matrix decomposition approaches is left for further work.

\section{Experiments} 
\label{sec:exps}\label{sec:resul}

\def\XX{{ICID}}
\def\OXX{{$\mathcal{O}$-\XX}}

We conduct experiments on synthetic and real-world data to assess the performance of \XX\ in causal structure learning from inverse covariance matrices and also in causal discovery tasks. The primary goal of the experiments is to examine the learning accuracy of the proposed method and its computational efficiency in different settings.

The proposed algorithms are available at \url{https://github.com/shuyu-d/icid-v2}.

\subsection{Experimental setting}

\paragraph{Benchmark data.}
Following~\cite{NEURIPS2018_e347c514}, the synthetic data are generated from linear SEMs where the causal structure $B$ is drawn from the Erd\H{o}s--R\'enyi (ER) or the scale-free (SF)~\citep{barabasi1999emergence} random graphs. The coefficients (edge weights) of $B$ are drawn from the uniform distribution $\text{Unif}([-2,-0.5]\cup [0.5, 2])$. 
Experiments are also conducted on the Simulated fMRI dataset~\citep{smith2011network}.

\paragraph{Baselines.}
The \XX\ method is tested in two scenarios. 
One scenario is to assess the proposed causal learning methodology irrespective of the statistical issues of inverse covariance estimation, in which case \XX\ is exceptionally labeled as \XY, meaning that the precision matrix $\Theta$ is the ground truth one. 
The other scenario is causal discovery, in which case \XX\ takes as input a precision matrix learned from observational data using a sparse inverse covariance estimator (details in \Cref{alg:ice-emp}, \Cref{ssec:app-sel}). 

In the benchmarks, 
\XY\ is tested along with~\cite[Alg.~1]{ghoshal2018learning}---labeled as \ogho---which is also given the ground truth $\Theta$ as input. Both methods are compared with GES~\citep{chickering2002optimal} and its Java-based implementation fastGES~\citep{ramsey2017million}, which take the observational data as input. \XX\ is compared with Direct\lingam~\citep{shimizu2011directlingam},
ICA-\lingam~\citep{shimizu2006linear}, GES/fastGES, \notears~\citep{NEURIPS2018_e347c514} and \golem~\citep{ng2020role}. 
The estimated graphs are evaluated by usual metrics (SHD, TPR, FDR and FPR) (details in \Cref{ssec-app:metrics}).

The CPU-based methods are run on one CPU of Intel(R) Xeon(R) Gold 5120 14 cores @ 2.2GHz; GOLEM is run on a GPU of Tesla V100-PCIE-32GB.

\subsection{Scalability in causal discovery}
\label{ssec:exp-scala}

In this experiment, \XX\ is evaluated in causal discovery tasks, in comparison with \notears, \golem\ and GES / FastGES, on random DAGs within the ER1 set with equal variance noise. The number $d$ of nodes varies in $\{$100, 200, 400, 600, 800, 1000, 2000$\}$, and the number of samples of $\X$ is set as $n=10 d$. The performances of \XY\ (provided with the ground truth $\Theta$) are also reported to showcase the structural learning aspect of \XX.

The optimization parameters of \XX, selected after a simple tuning using grid search, are given in \Cref{parag-app:param-opt}. In this benchmark, \XX\ uses a single ALM iteration (\Cref{alg:oicid}) followed by a projection on the DAG space for eliminating cycles (details in \Cref{sec-app:icid}). 
\revn{\XY\ uses multiple ALM iterations with respect to an accuracy-based stopping criterion (details in \Cref{sec-app:fista}).} 
\XX\ and \notears\ use the same CPU resource, similar to fastGES, and \golem\ is run
on GPU. %

\begin{figure}[htpb]
    \centering
    {\includegraphics[width=.98\textwidth]{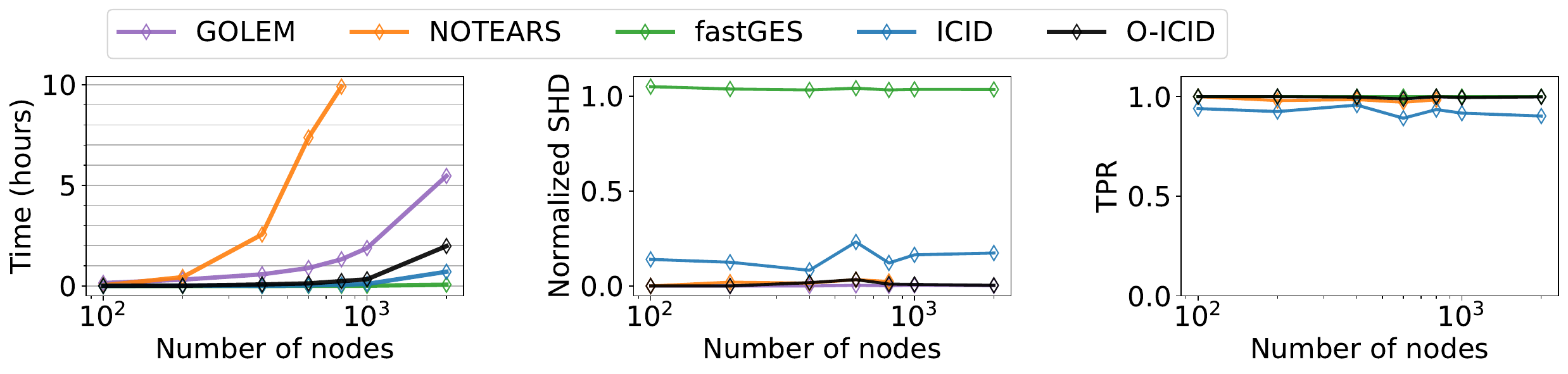}}  
\caption{Causal discovery results vs number of nodes. The number of nodes $d$ range from 100 to 2000. The SHD scores in the middle subplot are normalized by $\text{nnz}(B^\star)$. %
}
\label{fig:exp1}
\end{figure}

The results in \Cref{fig:exp1} show that \XX\ scales up efficiently---with a 5 times speedup over \golem\ for $d=2000$; and even greater speedups over \notears\ for $d \geq 600$), at the expense of a moderate loss in terms of TPR (above 80\%) %
and the normalized SHD (below 20\%). 
\XX\ outperforms fastGES in terms of SHD, within a comparably low computation time in all settings of~$d$. 
\XY\ yields close-to-exact learning accuracy in both TPR and SHD---besides \golem---while demonstrating good scalability.

\subsection{Learning from the precision matrices and robustness}
\label{ssec:exp-ev-nv}

In this experiment, we further examine the structural learning part of \XX~(\Cref{alg:oicid}) given oracle precision matrices, namely \XY. 
The linear SEMs are generated for the EV case and a NV case described as follows. %

Given a linear SEM variable $\X$ with equal noise variances, let $\Duu$ be the diagonal matrix such that $(\Duu)_{ii} = \sqrt{\varr(X_i)}$, 
and consider the following family of SEMs:
\begin{align} \label{eq:def-xlam}
    \X^{(\lambda)} := \Duu^{-\lambda} \X \quad \text{for~ ~} 0 < \lambda \leq 1.
\end{align}
It can be shown that $\X^{(\lambda)}  = \trs[(\Duu^{-\lambda} B \Duu^{\lambda})] \X^{(\lambda)} + \Duu^{-\lambda} \E$.
The noise variance matrix of $\X^{(\lambda)}$ is $\tilde{\E} = \Duu^{-\lambda}\E$, which has a diagonal covariance matrix 
$\tilde{\Omega} = \Duu^{-\lambda} \Omega \Duu^{-\lambda} = \Duu^{-2\lambda}$; note that  $\X^{(1)}$ is the standardization of $\X$.

\revn{In the first study, in light of \Cref{prop:main-2}, we evaluate the effectiveness of \XY\ in the NV ($\lambda>0$) case, given the true noise variances.}  
Hence \XY\ is tested using the oracle diagonal matrix $D^*:=\tilde{\Omega}^{-1}$ (overriding $D$ in \Cref{alg:oicid}, line 1) when $\lambda>0$. 
The EV ($\lambda=0$) case is also tested as a reference. 
The baseline method \ogho\ is tested using the same inverse covariance matrices, and GES (and/or fastGES) are tested using the observational data. 
For each setting of $\lambda \in \{$0, 0.1, 0.2, 0.4, 0.8, 1.0$\}$, the methods are evaluated with 10 random linear SEMs on ER$k$ DAGs ($k=$ 1, 2, 3). 
The number of nodes is $d=$ 50. 

\begin{figure}[htpb]
    \centering
    \subfigure[ER1]{\includegraphics[width=0.73\textwidth]{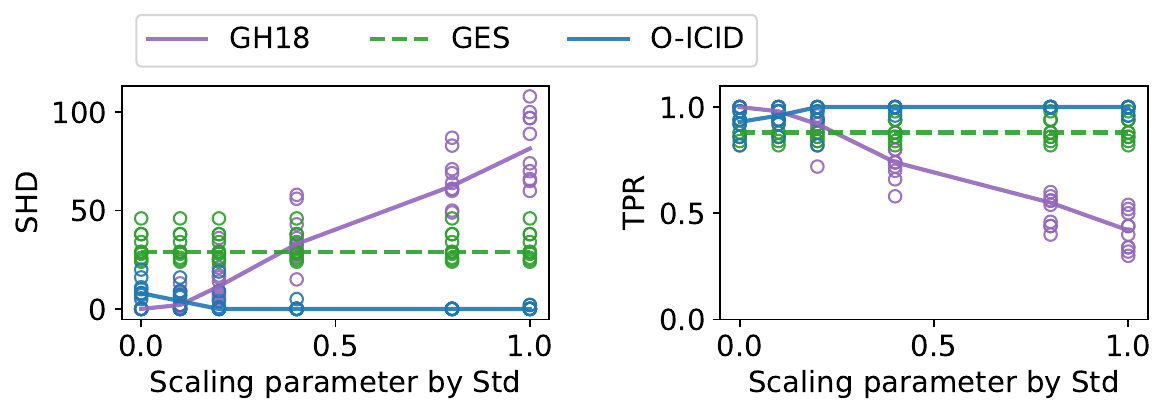}}
    \subfigure[ER3]{\includegraphics[width=0.73\textwidth]{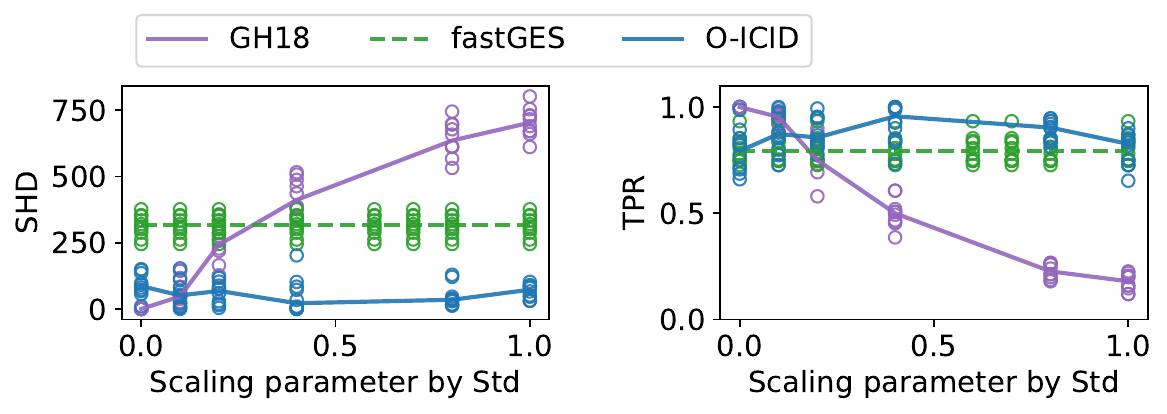}} 
    \\
    \subfigure[ER1, $\lambda=0.4$]{\includegraphics[width=0.73\textwidth]{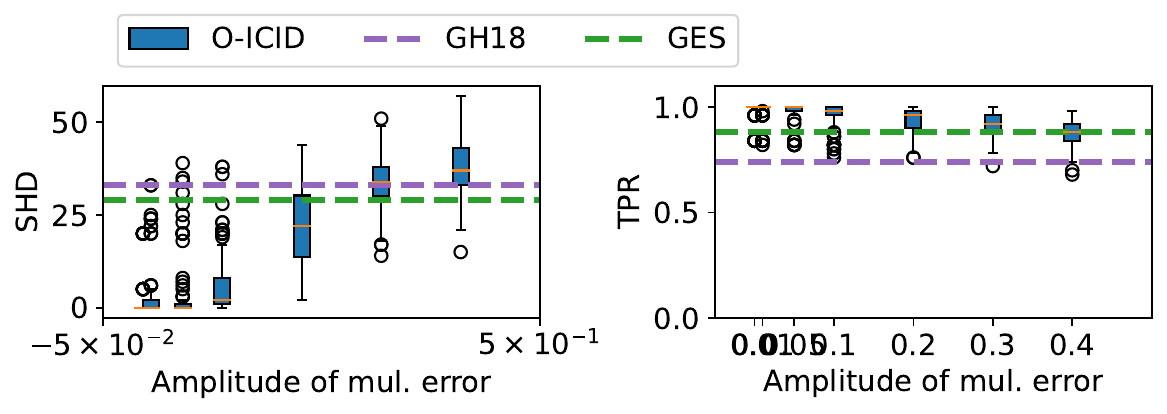}} 
    \subfigure[ER1, $\lambda=1$]{  \includegraphics[width=0.73\textwidth]{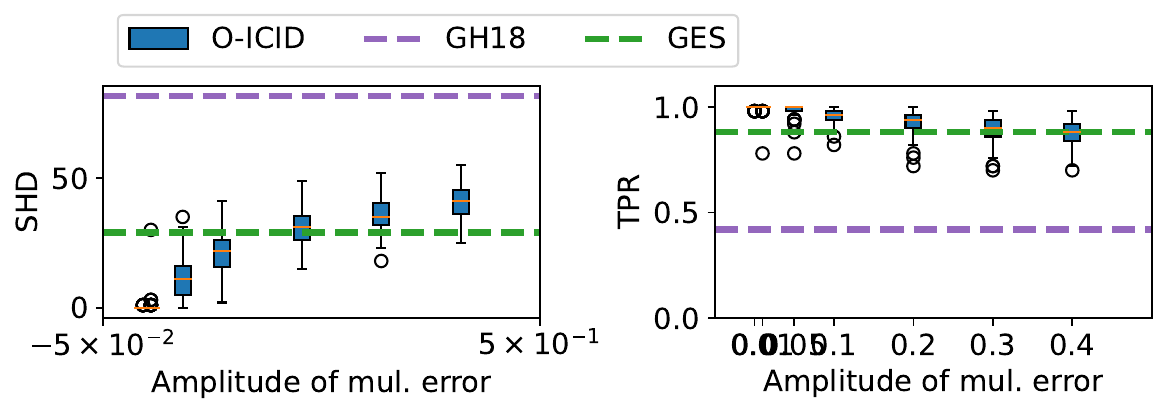}} 
    \caption{Causal structure learning by \XY\ compared to \ogho\ and GES.  The true DAGs are drawn from the
    ER1 and ER3 sets with $d=$ 50 nodes. 
    In (a)--(b): the x-axis indicates the scaling parameter $\lambda$ in~\eqref{eq:def-xlam}.
    In (c)--(d): the SHD plots use the log scale for the x-axis of ${\sigma}$ while the TPR plots use the linear scale for the same values of ${\sigma}$. }
    \label{fig:oicid-std-interp-flexD-v-other2-upd} 
\end{figure}

The results in \Cref{fig:oicid-std-interp-flexD-v-other2-upd} (a)--(b) (on ER1 and ER3) 
show that \XY\ and \ogho\ reach close-to-exact or exact learning accuracy (with often zero SHDs) %
for very small values of $\lambda$, corresponding to the EV case.
When $\lambda$ increases, however, the baselines of \ogho\ are seriously degraded for $\lambda\geq$~0.4. 
\revn{We notice that this is due to the fixed, variance-based node ordering rule of \ogho, which is tailored for the EV case (and cases with small $\lambda$); see \cite[Assumption 1]{ghoshal2018learning}.} 
By contrast, \XY\ (with the true noise variances given) remains stable and almost insensitive to $\lambda$. 
\revn{Such stability is observed on all ER$k$ graphs tested. The stability of \XY\ on SF2 and SF4 graphs is satisfactory in the sense that it has better or comparable accuracy than the baselines of GES for all $\lambda$.} 
The complete set of results are detailed in \Cref{ssec-app:exp-ev-nv}. 
Theses observations suggest the effectiveness of the proposed \Cref{alg:oicid} for \XX\ given the true noise variances. 

The computation time for \XY\ is around 1 to 3 seconds (Tables~\ref{tab:std-interp} and \ref{tab:std-interp-er3}, in \Cref{ssec-app:exp-ev-nv}) on the ER$k$ graphs, which is comparable to fastGES (for ER3) and slightly longer than \ogho, respectively, and is 
much shorter than GES, as could be expected from their theoretical complexities.

\paragraph{Misspecification of $D$.} 
The second study considers the NV case where \Cref{alg:oicid} is provided with a corrupted noise variance matrix $\hat D$. The misspecification of $\hat{D}$ is parametrized by multiplicative error terms: for all $i\in [d]$ independently, 
$\hat{D}_{ii} := \max(\frac{1}{2}, 1+ {\epsilon}) \tilde{\Omega}_{ii}^{-1}$, with ${\epsilon} \sim \mathcal{N}(0,{\sigma})$ for ${\sigma} \in (0, \frac{1}{2})$, 
where the minimal value $\frac{1}{2}$ in the multiplicative term is a safeguard
to ensure positive definiteness of~$\hat{D}$. 
The results are reported for ${\sigma}$ in $\{$0, 0.01, 0.05, 0.1, 0.2, 0.3, 0.4$\}$, in \Cref{fig:oicid-std-interp-flexD-v-other2-upd} (c)--(d).
\revn{For ER1, $\lambda=$ 0.4, a relatively challenging NV case (in view of \Cref{fig:oicid-std-interp-flexD-v-other2-upd} (a)),} the comparisons in \Cref{fig:oicid-std-interp-flexD-v-other2-upd} (c)--(d) show that \XY\ outperforms the baselines (the dashed lines marking the average scores) of \ogho\ and GES within moderate multiplicative misspecifications (${\sigma} \leq 0.2$) in terms of SHD and TPR. %
\revn{Similar comparisons (Figures~\ref{fig:oicid-std-interp-Dpert-2} and \ref{fig:oicid-std-interp-Dpert-3}, in \Cref{ssec-app:exp-ev-nv}) are observed with other settings on ER$k$ graphs.}

\subsection{Experiments on Simulated fMRI Data} 
\label{sec:fmri}

The simulations of the fMRI time series are described in detail by \cite{smith2011network}. %
Networks of varied complexity were used to simulate fMRI timeseries. The simulations were based upon the dynamic causal modelling (DCM) forward model~\citep{friston2003dynamic}. %
DCM uses the nonlinear balloon model~\citep{buxton1998dynamics} %
for the vascular dynamics, that is, the connection between the neural activities and the measured signal, sitting above a simple neural network model of the neural dynamics. 
The nodes of the DCM corresponded to brain regions. The external binary inputs (encoding neural activations) to the nodes are generated, which are not the same as the noise variables in the SEM, although related. %
Neural noise of standard deviation 0.05 of the difference in height between the two states was added.  
Different simulation datasets (Sim1--Sim28, \cite[Table~1]{smith2011network}) were generated depending on parameters including the number of nodes, duration of the fMRI session, and the number of separate realizations (``subjects''). 

The causal discovery methods are examined on Sim3 ($d=$ 15 nodes) and Sim4 ($d=$ 50), respectively, using all subjects' time series as observational data. 
The regularized \XX\ problem~\eqref{prog:icid-reg}, labeled as \XSkew, is motivated by   
the prior knowledge on the fMRI data that its skewness is mainly positive \citep[\S2.9]{hyvarinen2013pairwise}. 
Specifically, the matrix $\tilde{M}$~\eqref{eq:reg-matm} in the regularization term of \XSkew\ is defined with a third-order cumulant-based pairwise measure \cite[Theorem~2]{hyvarinen2013pairwise}, and the choice of $g(x):=-x^2$ in \eqref{eq:reg-matm} reflects the prior knowledge about the skewness of the fMRI data. 
More details including the Sim3 and Sim4 datasets, and the parameter selection of \XSkew\ are given in \Cref{ssec-app:icid-reg}. 

        \begin{table}[htpb]
        \small 
        \centering
        \caption{Causal discovery results on the fMRI Sim4 data~\citep{smith2011network}.
            \label{tab:exp-fmri-sim4}}
        \begin{tabular}{c|ccccc}
        \hline
        {\it fMRI Sim4}               & SHD  & TPR     & FDR   & FPR     & time (seconds) \\
        \hline                                   
        DirectLiNGAM     & 56   & 1.0    & 0.479 & 0.048   & 168.722     \\ 
        ICA-LiNGAM       & 158  & 0.754  & 0.775 & 0.136   & 10.307     \\ 
        GES              & 90   & 0.836  & 0.641 & 0.078   & 709.984    \\ 
        fastGES          & 89   & 1.0    & 0.657 & 0.101   & 1.121      \\ 
        \XX              & 14   & 0.820  & 0.219 & 0.012   & 0.679      \\ 
        \XX\ (skew)      & {\bf 6}   & 0.984  & {\bf 0.090} & {\bf 0.005}   & {\bf 0.517}      \\ 
        \hline
        \end{tabular}
        \end{table}

\begin{figure}[htpb]
  \centering
  \begin{minipage}[b]{0.31\textwidth}
  \centering
    \includegraphics[width=\textwidth]{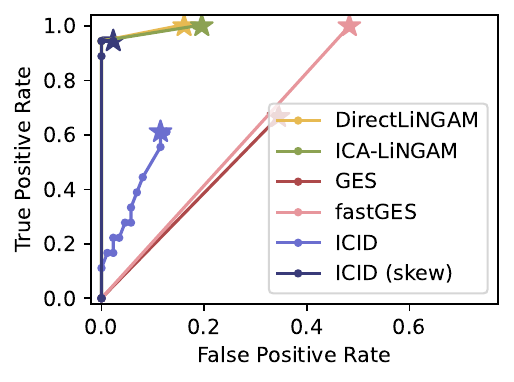}
    \captionof*{figure}{(a) fMRI Sim3}
  \end{minipage}
  \hspace{3mm}
  \begin{minipage}[b]{0.31\textwidth}
  \centering
    \includegraphics[width=\textwidth]{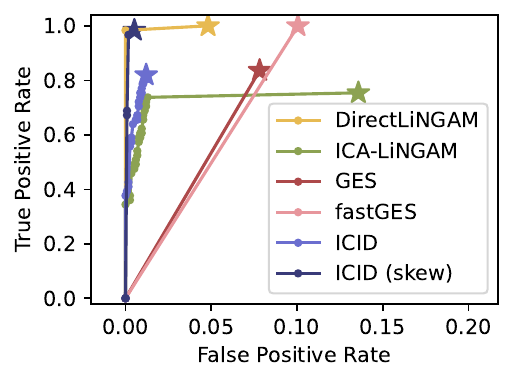}
    \captionof*{figure}{(b) fMRI Sim4}
  \end{minipage}
  \hfill
  \begin{minipage}[b]{0.32\textwidth}
    \centering
    \includegraphics[width=\textwidth]{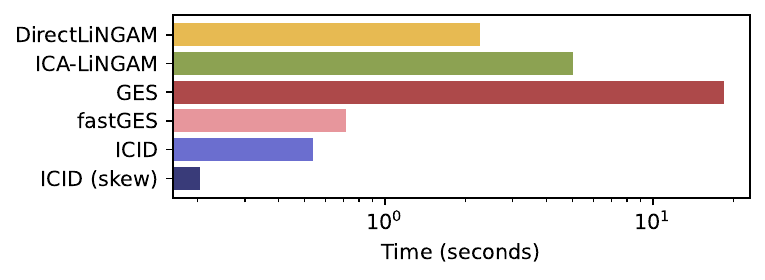} \\ 
    \includegraphics[width=\textwidth]{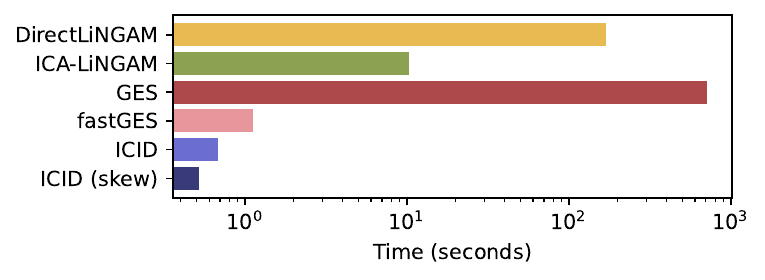} 
    \captionof*{figure}{(c) Runing time}
  \end{minipage}
  \caption{Results on the Simulated fMRI datasets (Sim3 and Sim4): (a)--(b) ROC curves. (c)~Running time on Sim3 (upper) and Sim4 (lower).\label{fig:fmri-all}}
\end{figure}

The performances of \XX\ and \XSkew\ on the Sim4 data, together with the baselines of Direct\lingam, ICA-\lingam, GES and fastGES, are reported in \Cref{tab:exp-fmri-sim4}. 
Moreover, on Sim3 and Sim4, each method is evaluated through the ROC curves of their estimated graphs, given in \Cref{fig:fmri-all} (a)--(b), together with the running time in \Cref{fig:fmri-all} (c). 
The results show that \XX\ obtains the second best SHD score (after \XSkew) on Sim4, with a good trade-off between the TPR and the FDR, and a computation time lower than fastGES. Moreover, on Sim3 and Sim4, \XSkew\ recovers almost exactly the true DAGs (attaining the lowest SHDs and close-to-one TPRs)---besides Direct\lingam---with a significantly low computation time.

\section{Discussion}
\label{ssec:relatedwork} 
The two approaches most related to \XX\ are that of
\cite{loh2014high} and \cite{ghoshal2018learning} %
as all rely on the inverse covariance matrix, yielding an identifiability result provided the knowledge of the noise variances.
The main difference regards the algorithmic approaches. 
\cite{loh2014high} proceed by defining a likelihood-based score on DAG structures, and then by searching for its minimizer using dynamic programming, assuming that the %
support graph of $\Theta$ %
has a restricted tree-width. 
\cite{ghoshal2018learning} proceed iteratively, recovering a topological order that is consistent with the causal structure. At each iteration a terminal variable 
is determined and removed from the linear SEM, while the corresponding row of the causal DAG is deduced using~\eqref{eq:prec-b}.

The difference between \XX\ and \citep{loh2014high} is twofold. On one hand, \XX\ tackles a decomposition of the inverse covariance matrix subject to a support constraint. 
\Cref{alg:oicid} thus addresses a constrained continuous optimization problem~\eqref{prog:oicid+eb}; an adapted FISTA is used in the ALM of \Cref{alg:oicid}, \revn{which has a significantly reduced computational cost due to the support constraint in \eqref{prog:oicid+eb}.} 
On the other hand, \XX\ enables to check the optimality of the provided solution (via the matrix equality constraint) while the approach in \citep{loh2014high} requires an exhaustive enumeration of the candidate solutions. 
\XX\ offers the same identifiability guarantees as \cite[Theorem 7]{loh2014high} under the same assumptions, as shown in \Cref{prop:main-2} and supported empirically in \Cref{ssec:exp-ev-nv}. %

Likewise, the difference between \XX\ and \citep{ghoshal2018learning} lies in the algorithmic strategies. 
The complexity of \ogho~\citep[Alg.~1]{ghoshal2018learning} %
is lower than
\XX; however \ogho\ primarily depends on the consistency of its
topological ordering rule (based on variances), which holds under
\cite[Assumption~1]{ghoshal2018learning}. In the general case where this
assumption does not hold, \eg, in challenging NV cases, the accurateness of \ogho\ 
is seriously degraded, while \XX\ (namely \XY) shows better robustness as
empirically evidenced in \Cref{ssec:exp-ev-nv}.

Another related work, \cite{varando2020learning} tackles the 
decomposition of the inverse covariance matrix through \eqref{eq:prec-b} and
maximum-likelihood of the linear SEM. \XX\ differs mainly in that it uses the support constraint in the formulation of problem~\eqref{prog:oicid+eb} 
and exploits this constraint in its optimization. %
A more remotely related approach is that of  maximum-likelihood estimation (MLE) based methods~\citep{aragam2019globally,ng2020role}. %
Note that the log-likelihood function for inverse covariance estimation (via
the Graphical Lasso formulation \citep{Friedman2007}) %
can be rewritten in terms of the linear SEM parameters $(B,\signoise)$ with
$\Theta$ as in~\eqref{eq:prec-b}. %
It follows that, ignoring the constant terms, 
$ \frac{1}{2} f(\Theta; X) := \frac{1}{2n}\trace(\trs[X]\Theta X) - \frac{1}{2}\log\det(\Theta) = -\log p(B,\signoise; X)$, 
which is the M-estimator of \golem~\citep{ng2020role} for 
optimizing $B$. 
The difference between ICID and \golem\ is that ICID proceeds with two separate steps, which in total are computationally lighter than the MLE approach of \golem. %
\XX\ has a lower per-iteration cost than \golem. Overall, it is suggested and empirically supported that \XX\ offers an improved scalability than \golem\ (\Cref{ssec:exp-scala}).

\section{Conclusion and perspectives}
This main contribution of the paper is the \XX\ method that decomposes the inverse covariance matrix $\Theta$ for learning causal structures. \XX\ leverages continuous optimization techniques for decomposing $\Theta$ subject to a support constraint~\eqref{prog:oicid+eb}. 
This approach (paired with a basic, empirical inverse covariance estimation method) demonstrated high efficiency in learning causal structures on both small and large graphs, and showed good scalability up to thousands of nodes.  
The \XX\ approach thus offers the potential to leverage future advances regarding the inverse covariance estimation step, noting that this step faces some of the key challenges (regarding, \eg, node degree distributions of the sought DAG, limited amount of observational data) for causal discovery. 

Perspectives for further research include design of regularization schemes, and ensemble approaches on the top of \XX, e.g., considering solutions obtained with different candidates for the inverse noise variance matrix $D$. The question of whether and how these solutions can be exploited will be investigated by taking inspiration from the Cause-Effect Pair Challenge \citep{CEP-Guyon}, leveraging supervised learning on the top of candidate solutions, to learn classifiers (dedicated to the considered graphs) discriminating the true edges from the spurious ones.

\appendix

\section{Proofs} 
\label{app:prfs}

\subsection{Example with chordal matrices}
\label{app:prfs-31}

The following theorem illustrates the well-definedness of the support-constrained
decomposition (\Cref{def:idecomp}) in the case where $\Theta$ is supported on a 
chordal graph. %
This result builds upon the characterization of Cholesky decomposition of
positive definite matrices supported on a chordal graph \citep{fulkerson1965incidence,rose1970triangulated,paulsen1989schur,OPT-006}. %
 
\begin{proposition} 
    \label[proposition]{prop:main-1}
Let $\Theta\in\spd_{++}$ be a positive definite matrix whose support graph $\suppo(\Theta)$ is chordal. 
Then $\Theta$ admits a support-constrained decomposition (\Cref{def:idecomp}). More over, the set
$\sset[\Theta]$~\eqref{def:set-idecomp} contains a pair $(B,D)$ where $B\in\dom$ is supported on a DAG. 
\end{proposition}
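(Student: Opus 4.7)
The plan is to construct the pair $(B,D)\in\sset[\Theta]$ explicitly from the classical no-fill-in Cholesky factorization of a chordal positive definite matrix, and then undo the underlying permutation. Specifically, since $\suppo(\Theta)$ is chordal, by Fulkerson--Gross / Rose it admits a perfect elimination order (PEO), that is, an ordering of the vertices under which, for each vertex $v$, the higher-indexed neighbors of $v$ form a clique in $\suppo(\Theta)$. Let $P$ be the corresponding permutation matrix; then $P\Theta\trs[P]$ is positive definite and its support graph is $\suppo(\Theta)$ relabeled by $P$, with the vertices now listed in the PEO.

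Next, I would invoke the classical zero-fill-in result (see \cite{rose1970triangulated,OPT-006}) to write the $LDL^{\mathrm{T}}$-Cholesky factorization
\[
P\Theta\trs[P] \;=\; (I-\tilde B)\,\tilde D\,\trs[(I-\tilde B)],
\]
where $\tilde B$ is strictly lower triangular (hence $\diag(\tilde B)=0$), $\tilde D\succ 0$ is diagonal, and $\suppo(\tilde B)$ is contained in the strictly lower-triangular part of $\suppo(P\Theta\trs[P])$. Setting $B:=\trs[P]\tilde B P$ and $D:=\trs[P]\tilde D P$, the matrix $D$ is still diagonal and positive definite, $\diag(B)=0$, and
\[
\Theta \;=\; (I-B)\,D\,\trs[(I-B)],
\]
so the matrix equation in \eqref{def:set-idecomp} holds. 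The inclusion $\suppo(B)\subset\suppo(\Theta)$ follows from the no-fill-in inclusion by transporting indices back through $P$; the fact that $B$ is supported on a DAG is immediate because $\tilde B$ is strictly lower triangular (a triangular matrix trivially defines an acyclic support) and conjugation by a permutation matrix is just a relabeling of nodes, which preserves acyclicity.

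The only delicate ingredient is the no-fill-in statement itself, which does all the work: it is what guarantees that $\suppo(\tilde B)$ stays within $\suppo(P\Theta\trs[P])$ rather than producing additional nonzeros during elimination. I would cite this from \cite{rose1970triangulated,OPT-006} rather than reprove it, as it is the classical characterization of chordal sparsity under Cholesky factorization. Everything else in the argument reduces to permutation bookkeeping and to observing that $(B,D)$ meets each of the four conditions defining $\sset[\Theta]$.
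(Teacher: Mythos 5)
Your proposal is correct and follows essentially the same route as the paper's proof: take a perfect elimination ordering of the chordal support graph, apply the classical zero-fill-in $LDL^{\mathrm{T}}$ factorization of the permuted matrix (the paper cites this as \Cref{lemm:paulsen-2}, from the same Rose/Vandenberghe references you invoke), and conjugate back by the permutation to obtain $(B,D)\in\sset[\Theta]$ with $B$ permutation-similar to a strictly triangular matrix, hence a DAG. If anything, your write-up is slightly more explicit than the paper's in matching the factor to the required form $(I-B)$ with $\diag(B)=0$, but the substance is identical.
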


\begin{proof} 
For completeness, we state the following lemma on the connections between chordal graphs and positive definite matrices 
that can {\it factor without fill}~\citep{fulkerson1965incidence,rose1970triangulated,paulsen1989schur}. 
Given an undirected graph $\grp=(\grv,\gre)$ endowed with a node ordering $\sigma$, the following sets are considered: 
\begin{align} 
    & \mathcal{S}_{\grp,\sigma} = \{ \Theta \in \spd_{++}: \Theta_{ij} = 0 \text{~for~} (\sigma^{-1}(i),\sigma^{-1}(j)) \notin \gre\}, \label{eq:set-sgs} \\ 
    & \mathcal{L}_{\grp,\sigma} = \{ L \in \dom: L_{ii} = 1, L_{ij} = 0
    \text{~for~} i < j \text{~or~} %
     (\sigma^{-1}(i),\sigma^{-1}(j)) \notin \gre\}. \label{eq:set-lgs} 
\end{align}
\begin{lemma}[\cite{rose1970triangulated,paulsen1989schur}]
    \label{lemm:paulsen-2}  
    Let $\grp = (\grv,\gre)$ be a chordal graph, $\sigma$ an ordering of $\grv$ which corresponds to a perfect elimination ordering of $\grp$. 
    Then it holds that 
    $\Sigma \in \mathcal{S}_{\grp,\sigma}$~\eqref{eq:set-sgs} if and only if $L \in \mathcal{L}_{\grp,\sigma}$~\eqref{eq:set-lgs}, 
	where $L$ is the Cholesky factor of $\Sigma$ such that $\Sigma = LD\trs[L]$. 
\end{lemma}

%
Based on \Cref{lemm:paulsen-2}, $\grp$ being chordal implies that, %
for a certain node permutation $\sigma$ (corresponding to the permutation matrix $P_{\sigma}$), 
the positive definite matrix $\tilde{\Theta}:= P_{\sigma} \Theta \trs[P_{\sigma}]$ 
belongs to $\mathcal{S}_{\grp,\sigma_0}$~\eqref{eq:set-sgs} and that the (lower-triangular) Cholesky
factor matrix $\tilde{L}$ of $\tilde{\Theta}$ (such that $\tilde{L}\tilde{D}\trs[\tilde{L}] = \tilde{\Theta}$
for a diagonal matrix $\tilde{D}$) satisfies $\tilde{L}\in\mathcal{L}_{\grp,\sigma_0}$~\eqref{eq:set-lgs}. As a consequence,
the matrices $(A,D)$ which are $\sigma$-similar to $(\tilde{L},\tilde{D})$, \ie, $A :=
\trs[P_{\sigma}] \tilde{L} P_{\sigma}$ and $D:=\trs[P_{\sigma}] \tilde{D} P_{\sigma}$ satisfy: 
\begin{align*}
    AD\trs[A] = \trs[P_{\sigma}] \tilde{L}\tilde{D}\trs[\tilde{L}] P_{\sigma} =
\trs[P_{\sigma}] \tilde{\Theta} P_{\sigma} = \Theta,
\end{align*}
which means that $A'=A\sqrt{D}$ satisfies $A'\trs[A']=\Theta$. Moreover, it holds that $\suppo(A') \subset \suppo(\Theta)$ because (i) the two support graphs are identical to $\suppo(\tilde{L})$ and $\suppo(\tilde{\Theta})$, respectively, up to the node permutation $\sigma$ and (ii) $\suppo(\tilde{L}) \subset \suppo(\tilde{\Theta})$ by \Cref{lemm:paulsen-2}. 
Therefore $A'\trs[A']=\Theta$. 
Note that $A'$ is $\sigma$-similar to $\tilde{L}D'$ (with diagonal $D'=P_{\sigma}\sqrt{D}\trs[P_{\sigma}]$), which is a strict triangular matrix. 
Hence $A'$ represents a DAG.
\end{proof}

\subsection{Proof of \Cref{prop:thm4}}
\label{prf:prop-thm4}

\begin{proof}
    The pair $(B,\Omega)$ satisfies $\varphi(B,\Omega)=\Theta$ by definition of the SEM given. It remains to prove that $B$ also satisfies $\suppo(B)\subset \suppo(\offd{\Theta})$. Indeed, note that \Cref{assp:loh14-assp1} is equivalent to $\Theta_{ij}\neq 0$ if $(i,j)\in \mathcal{M}(B)$ (moralization of $B$). Hence, in particular, $\Theta_{i,j} \neq 0$ if $B_{i,j} \neq 0$ or $B_{j,i} \neq 0$. 
\end{proof}

\subsection{Proof of \Cref{prop:main-2}}
\label{prf:prop-main-2}
\begin{proof}
The pair satisfying $(B,{\Omega^\star}^{-1})\in\sset[{\Theta}]$ entails that 
\begin{align}
    \label{eq:theta-2bs}
    (I-B){\Omega^{\star}}^{-1} \trs[(I-B)] & = (I-B^\star) {\Omega^{\star}}^{-1} \trs[(I-B^\star)]. 
\end{align}
Now apply the following change of variable from $B$ to $\tilde{B}$: 
\begin{align*}
    (I-B) {\Omega^\star}^{-\frac{1}{2}} = {\Omega^\star}^{-\frac{1}{2}} (I-{\Omega^\star}^{\frac{1}{2}}B {\Omega^\star}^{-\frac{1}{2}}) := {\Omega^\star}^{-\frac{1}{2}} (I-\tilde{B}).
\end{align*}
The same change of variable applies to $B^\star$ such that
$(I-B^\star){\signoise^{\star}}^{-\frac{1}{2}}:= {\signoise^\star}^{-\frac{1}{2}} (I-\tilde{B}^\star)$. 
Then the equality \eqref{eq:theta-2bs} can be rewritten as 
${\signoise^\star}^{-\frac{1}{2}}(I-\tilde{B}) \trs[(I-\tilde{B})] {\signoise^\star}^{-\frac{1}{2}} 
=  {\signoise^\star}^{-\frac{1}{2}}(I-\tilde{B}_0) \trs[(I-\tilde{B}_0)]  {\signoise^\star}^{-\frac{1}{2}}$, \ie, 
\begin{align*}
    (I-\tilde{B}) \trs[(I-\tilde{B})] = (I-\tilde{B}^\star) \trs[(I-\tilde{B}^\star)]. 
\end{align*}

\revn{On the other hand, $\tilde{B}^\star$ and $\tilde{B}$ are supported on DAGs, same as $B^\star$ and $B$, respectively, since they are the results of a diagonal scaling (by the diagonal matrix ${\Omega^{\star}}^{\frac{1}{2}}$ on the left and ${\Omega^{\star}}^{-\frac{1}{2}}$ on the right) applied on $B^\star$ and $B$, which preserves the support graph.} 
Consequently, both $(I-\tilde{B}^\star)$ and $(I-\tilde{B})$ are permutation similar to unit lower-triangular matrices (lower-triangular matrices with one's on the diagonal). 
The result then follows from %
\cite[Lemma 25]{loh2014high}, which confirms that if two square matrices $X$ and $Y$ are permutation similar to unit lower-triangular matrices such that $XX^T=YY^T$, then $X=Y$. 
Through this lemma,  $I-\tilde{B}=I-\tilde{B}^\star$, which concludes the proof. 
\end{proof}

\revn{
\subsection{Proof of \Cref{coro:main-2}}
\label{prf:coro-main-2}
In the following proof, the technique using the diagonal information of $\Theta$ in the EV case can be seen in, \eg, \citep{chen2019causal}. The rest of the proof is based on \Cref{prop:main-2} on one hand, and on the other hand, the theoretical framework of \cite{ghassami2020characterizing} for proving {\it quasi-equivalence} between directed graphs from MLE-based causal discovery models. The notion of quasi-equivalence \citep[Definition~9]{ghassami2020characterizing} is a generalization of Markov equivalence (between DAGs) for directed graphs.

Given a distribution $\Theta$, depending on the parametrization with respect to the causal graph $G$, a constraint on the entries of $\Theta$ is called a {\it hard constraint} if the set of values satisfying that constraint is Lebesgue measure zero over the space of the parameters involved in the constraint. The set of hard constraints of a directed graph $G$ is denoted by $H(G)$.

The two cited assumptions, needed for deducing quasi-equivalence between directed graphs, are restated as follows. 

\noindent{\bf Assumption} (\cite[Assumptions 1--2]{ng2020role}) 
{\it Under the same notations: 
\begin{enumerate}
    \item[(i)] A distribution $\Theta$ is generalized faithful (g-faithful) to graph $G$, meaning that $\Theta$ satisfies a hard constraint $\kappa$ if and only if $\kappa \in H(G)$. 
    \item[(ii)] Let $\suppo(G)$ be the edge set of $G$.  
    For a DAG $\grp$ and a directed graph $\hat{\grp}$, it holds that 
    \begin{itemize}
        \item $|\suppo(\hat{\grp})| \leq |\suppo(\grp ) |$, then $H(\hat{\grp}) \not\subset H (\grp)$
        \item  $|\suppo(\hat{\grp})| < |\suppo(\grp ) |$, then $H(\hat{\grp}) \not\subseteq H (\grp)$
    \end{itemize}
\end{enumerate}
}
}
\revn{
\begin{proof}
    (\Cref{coro:main-2}). 
    First we prove two conditions before applying \Cref{prop:main-2}: 
    \begin{itemize}
        \item[(i)] \Cref{assp:loh14-assp1} is satisifed under the generalized faithfulness assumption (\cite[Assumption 1]{ng2020role}). The distribution $\Theta$ is such that $\Theta=\varphi(B^\star,\Omega^\star)$ where $B^\star$ is supported on a DAG, denoted as $G^*$. Let  $i\in[d]$ and $j\in[d], j\neq i$ be a pair such that $\Theta_{ij}=0$, which is a hard constraint of $\Theta$ denoted as $\kappa$. Then the g-faithfulness of $\Theta$ entails that $\kappa$ is also a hard constraint of the DAG $\grp^*$, \ie, $\kappa\in H(G^*)$. Consequently, the nodes $i$ and $j$ must not be adjacent nor spouses in $G^*$. Because otherwise, the set of values taken by the parametrization (\Cref{lemm:loh14-lemm1}) 
            $$\Theta_{ij} = -\frac{B^\star_{ij}}{{\omega_{j}^\star}^{2}} 
    - \frac{B^\star_{ji}}{{\omega_{i}^\star}^{2}} 
                    + \sum_{\ell=1}^d
                    \frac{B^\star_{i\ell}B^\star_{j\ell}}{{\omega_{\ell}^{\star}}^{2}} 
    $$
will have a nonzero Lebesgue measure, which contradicts with $\kappa$ being a hard constraint of $\Theta$. Therefore, $B^\star$ (as an adjacency matrix of $G^*$) must satisfy $B^\star_{ij} = B^\star_{ji} =0$ and $ B^\star_{i\ell} B^\star_{j\ell} = 0 $ for all $\ell$, which yields \Cref{assp:loh14-assp1}.

    \item[(ii)] It holds that $D = {\Omega^{\star}}^{-1}$, when the diagonal matrix $D$ is defined by $D_{ii} = \min_{j\in[d]} \{ \Theta_{jj}\}$, $\forall i \in [d]$. This is because when the noise variances are equal, \ie, ${\Omega^{\star}_{ii}} := \bar{\omega}^2>0$ for all $i$, then the diagonal terms of $\Theta$, from \Cref{lemm:loh14-lemm1}, are 
$\Theta_{ii} = \frac{1}{\bar{\omega}^{2}} \big(1 + \sum_{\ell=1}^d {B^\star}_{i\ell}^2\big)$. 
Hence we have $\min_{j\in[d]} \{ \Theta_{jj} \} = {\bar{\omega}}^{-2}$, which is attained because there is always a sink (a variable without child node) in the support DAG of $B^\star$. 
    \end{itemize}

Now given $D={\Omega^{\star}}^{-1}$, let $(B, D)$ be a solution to the $\ell_0$ minimization problem \eqref{prog:oicid-l0}. 
Then $(B,D)$ necessarily belongs to the constrained set $\sset[\Theta]$ \eqref{def:set-idecomp}. 
Hence it follows from \Cref{prop:main-2} that $B=B^\star$
as long as the support graph of $B$ is a DAG. 
Therefore it remains to prove that $B$ is acyclic. 
For this purpose, we evoke the proof of \cite[Theorem 1]{ng2020role} %
with necessary adaptations.  

Denote the support graph of $B$ as $G$. Since $\Theta=\varphi(B,D)$, %
$\Theta$ contains all distributional constraints of $G$. Hence $H(G) \subset H(G^*)$ under the g-faithfulness assumption. On the other hand, the solution $B$ is such that $\ellzero{B}$ is minimal among all feasible elements in $\sset[\Theta]$ including $B^\star$, hence $|\suppo(G) | \leq |\suppo(G^*)|$. By  
\cite[Assumption 2]{ng2020role} we have $H(G) \not\subset H(G^*)$. Therefore it holds that $H(G)  = H(G^*)$, and $G$ is quasi equivalent to $G^*$. 

Finally, suppose by contraction that $G$ has cycles. Let $C=(X_1,\dots,X_c,X_1)$ be a smallest (shortest) cycle on $G$. By quasi equivalence, $G^*$ and $G$ should have the same adjacencies (either via a real edge or a virtual edge~\citep{richardson1996polynomial}), $B^\star$ should also have nonzeros in the location of all the edges of~$C$.  
\begin{itemize}
    \item If $|C|>3$, then the DAG $G^*$ has a v-structure, denoted as $(\X_{i-1}\to \X_i \leftarrow \X_{i+1})$, which entails the existence of a subset of nodes $S$, $\X_i\notin S$, such that $\X_{i-1}\ind \X_{i+1} | S$. However this conditional independence relation is not true for $G$ (given the cycle). This contradicts with quasi equivalence. 

    \item If $|C|=3$, then $G$ should also have a triangle on the same triplet of nodes, which contradicts the triangle condition. 

    \item If $|C|=2$. Let $C=(X_1,X_2,X_1)$. 
If none of the adjacencies in $G$ 
to $C$ are in-going,
then $C$ can be reduced to a single edge and the resulting directed graph is equivalent to $G$~\citep{ghassami2020characterizing}. Hence, due to the edge number minimality, such $C$ is not possible. If there exists an in-going edge, say from $X_p$ to one end of $C$, there will be a virtual or real edge to the other end of $C$ as well.
Therefore, $X_p$, $X_1$, and $X_2$ are adjacent in $G$ and hence in $G^*$, which contradicts the triangle condition. Also, if the edge between $X_p$ and one end of $C$ is a virtual edge, $X_p$ should have a real edge towards another cycle in $G$, which, with the virtual edge, again forms a triangle, and hence contradicts the triangle condition.
\end{itemize}
In conclusion, in all cases, quasi equivalence or the triangle assumption is violated, which is a contradiction. Therefore the support graph of $B$ is a DAG. 
\end{proof}
}

\section{Algorithms and computational details} 
\label{sec-app:alg}

For a diagonal matrix $D\succ 0$, the smooth part of the augmented Lagrangian of
\XX~\eqref{prog:oicid+eb} or \eqref{prog:icid-reg} at $(B,\Lambda)\in\dom\times\dom$, denoted as $\augl(B,\Lambda)$, is 
\begin{align}
    \label{eq-app:lagr-icid}
\rev{\augl (B, \Lambda) = r(B) + \braket{\Lambda,\Theta-\varphi_D(B)} + \frac{\rho}{2} \fro{\Theta-\varphi_{D}(B)}^2} 
\end{align}
where $\varphi_D(B):= (I-B)D\trs[(I-B)]$ \rev{and $r(B)$ is either zero or the regularization function of \eqref{prog:icid-reg}.} 
 
The function \eqref{eq-app:lagr-icid} is linear in $\Lambda$, and the gradient
$\nabla_{\Lambda}\augl (B,\Lambda)= \Theta-\varphi_D(B)$ is used in the dual
ascent step in \Cref{alg:oicid}. 
The gradient and Hessian of $\augl(\cdot, \Lambda)$ in $B$ are given in the following proposition.

\begin{proposition}%
    \label[proposition]{prop:hess-oicid} 
For a diagonal $D\succ 0$ and $\Lambda \in \dom$, the gradient and Hessian of
$\augl(\cdot, \Lambda)$~\eqref{eq-app:lagr-icid} are as follows: 
\begin{align*}
    \nabla_B \augl(B,\Lambda) & =  2 \big(\Lambda + \rho(\Theta - \varphi_D(B)) \big) (I-B)D + \nabla_B r(B)  \\ 
    \nabla_B^2 \augl(B,\Lambda)[Z] & =  2 \Big( 4\rho\cdot\symm(ZD\trs[(I-B)]) (I-B)D -  (\Lambda+ \rho(\Theta-\varphi_D(B))) ZD  \Big)  + \nabla_B^2 r(B)[Z] 
\end{align*}
where $\symm(X) := \frac{X+\trs[X]}{2}$. 
\rev{When $r(B)$ is nonzero, for $r(B):= \lambda_2 \trace(\trs[\tilde{M}](B\odot B))$ in \eqref{prog:icid-reg}, we have 
\begin{align} 
    \label{eq:gd-h-reg}
    \nabla_B r(B) = 2\lambda_2 \tilde{M}  \odot B, \quad\text{and} \quad \nabla_B^2 r(B) [Z] = 2\lambda_2 \tilde{M} \odot Z.
\end{align}
} 
\end{proposition}

\begin{proof}
The gradient and Hessian \eqref{eq:gd-h-reg} of the regularizer $r(B)$, in the case of the regularized \XX~\eqref{prog:icid-reg}, can be obtained from straightforward calculus. 
Next, by setting the regularization back to zero, we denote the remaining part of \eqref{eq-app:lagr-icid} as  $\augl(\cdot, \Lambda)$. 
The differential of $\varphi_D(\cdot)$ in \eqref{eq-app:lagr-icid} is as follows, 
\begin{align}
    \label{eq:fdiff-oicid}
    \dop\varphi_D(B)[Z] =  - 2\cdot\symm(ZD\trs[(I-B)])
\end{align}
for any $Z\in\dom$. 
It follows that 
\begin{align}
    \label{eq:fdiff-oicid}
    \braket{\nabla_B \augl(B,\Lambda), Z} =  2\Bigl\langle{\underbrace{\Lambda + \rho(\Theta - \varphi_D(B))}_{\phi_1(B)}, \underbrace{\symm(ZD\trs[(I-B)])}_{\phi_2(B)}}\Bigl\rangle. 
\end{align}
This entails that 
\begin{align*}
\nabla_B \augl(B,\Lambda) 
 =  2 \big(\Lambda + \rho(\Theta - \varphi_D(B)) \big) (I-B)D. 
\end{align*}

For the Hessian: the differentials of $\phi_1(B)$ and $\phi_2(B)$ in
\eqref{eq:fdiff-oicid} are: 
\begin{align*}
    \dop\phi_1(B)[Z] = 2\rho \symm(ZD\trs[(I-B)])  
    \quad \text{and} \quad \dop\phi_2(B)[Z] = - ZD\trs[Z]. 
\end{align*}
Hence, by differentiating the right-hand side of \eqref{eq:fdiff-oicid}, we have 
\begin{align}
    \label{eq:zhz-oicid}
    \braket{Z, \nabla_B^2 \augl(B,\Lambda) [Z]} = 2 \Big(\underbrace{2\rho \fro{\symm(ZD\trs[(I-B)])}^2}_{a_1(B)} -  \underbrace{\braket{\Lambda+\rho (\Theta-\varphi_D(B)), ZD\trs[Z]}}_{a_2(B)} \Big). 
\end{align}
The first term $a_1(B)$ on the right-hand side is 
\begin{align*}
    a_1(B) 
    &= 2\rho \Big( \trace((I-B)D\trs[Z] \symm(ZD\trs[(I-B)]))  + \trace(ZD\trs[(I-B)] \symm(ZD\trs[(I-B)])) \Big) \\ 
    &= 2\rho \Big( \trace(\trs[Z] \symm(ZD\trs[(I-B)]) (I-B)D )  + \trace(D\trs[(I-B)] \symm(ZD\trs[(I-B)]) Z) \Big) \\ 
    &= 4\rho \trace\Big(\trs[Z] \symm(ZD\trs[(I-B)]) (I-B)D \Big), 
\end{align*}
and $a_2(B)$ is 
\begin{align*}
    a_2(B) 
    &= \trace\Big( ZD\trs[Z] (\Lambda+ \rho(\Theta-\varphi_D(B))) \Big) 
    = \trace\Big( \trs[Z] (\Lambda+ \rho(\Theta-\varphi_D(B))) ZD\Big). 
\end{align*}
Through~\eqref{eq:zhz-oicid}  we have 
\begin{align*} 
    \nabla_B^2 \augl(B,\Lambda)[Z] =  2 \Big( 4\rho\cdot\symm(ZD\trs[(I-B)]) (I-B)D -  (\Lambda+ \rho(\Theta-\varphi_D(B))) ZD  \Big).
\end{align*}
\end{proof}

\subsection{Primal problem solver for \XX} 
\label{sec-app:fista}

The FISTA \citep{beck2009fast} is used for solving~\eqref{prog:primal-oicid} in view of the $\ell_1$-norm term. The adaptation of FISTA is detailed in \Cref{alg:id-fista}.

\begin{algorithm}[htpb]
\caption{FISTA for the primal problem~\eqref{prog:primal-oicid} of \XX\label{alg:id-fista}}
    \begin{algorithmic}[1]
        \REQUIRE{Inverse covariance matrix $\Theta\in\dom$, objective function $F(\cdot):=\augl(\cdot,\Lambda)$ of~\eqref{prog:primal-oicid}, $\alpha_0>0$, $\gamma\in (0,1)$, $\beta=\frac{1}{2}$, tolerance $\epsilon$  } 
        \STATE Initialize: $\iter{B}{0} = \bm{0}_{d\times d}$, set $\iter{Y}{0}=\iter{W}{0}$. 
\label{line:fista-hof}
        \FOR{$s=1,2,\dots$ } 
        \STATE \label{algl:2-ls} Backtracking: find smallest integer $k_s\geq 0$ such that, 
for $\eta_s:= \alpha_0\beta^{k_s}$,
\hfill{\it\# see \eqref{eq:def-alpha0}} 
$$F(\tilde{B}) - F(\iter{Y}{s-1}) \leq - \gamma \eta_s \|\gd F(\iter{Y}{s-1})\|^2,
$$
where 
\begin{align*}
\tilde{B}= \prox_{\eta_s \ell_1}\big( \iter{Y}{s-1} - \eta_s \gd F(\iter{Y}{s-1})\big).
\end{align*}
\hfill{\em \# see~\eqref{eq:grad-subsp}-\eqref{eq:proxl1}} 
\STATE Update FISTA iterates: 
\begin{align*} 
    \iter{B}{s} & = \tilde{B} \quad\text{and}\quad %
    \iter{Y}{s} = \iter{B}{s} + \frac{s-1}{s+2} (\iter{B}{s} - \iter{B}{s-1}).\nonumber
\end{align*}
\STATE{Stop if $\fro{\Delta(\iter{B}{s})}\leq \epsilon$:  
\label{line:opti-b}}
\hfill{\it \# see \eqref{eq:deltab} }\\
\quad\qquad return $B_s$
\ENDFOR  { \label{line:fista-eof} }
    \end{algorithmic} 
\end{algorithm}

\paragraph{Support constraint of \XX.} 
The support constraint of \XX\ is entirely reflected in the primal problem
\eqref{prog:primal-oicid} in the form of $B \in E_{\Theta}$ where 
$\doms$~\eqref{def:subsp}, \ie,   
\begin{align} 
    \label{eq:def-subsp}
    \subc = \{ B\in\dom: B_{ij} =0 \quad \forall i=j \text{~or~} (i,j) \notin \suppo(\Theta)\}
\end{align} 
is a linear subspace of $\dom$ with dimension $(\|\Theta\|_0-d)$.  
Hence the support constraint of~\eqref{prog:primal-oicid} 
can be satisfied using subspace projection, which has the following form. 
\begin{definition}%
    \label[definition]{def:projs}
    Given (an adjacency matrix) $S\in\dom$, the projection onto the support graph $\suppo(S)$ is denoted and defined as $P_S:\dom\to\dom$ such that  
    $$(P_S(Z))_{ij} = \twopartdef{Z_{ij}}{(i,j)\in\suppo(S)}{0}{\text{otherwise.}}$$
\end{definition}

It follows that the gradient of $\augl(\cdot,\Lambda)$ restricted to subspace $\subc$, %
denoted as $\gd F(\cdot)$, is 
\begin{align}
    \label{eq:grad-subsp}
\gd F(B)=P_{\subc} (\nabla_B \augl(B,\Lambda))
\end{align}
where $\nabla_B \augl(B,\Lambda)$ is computed in~\Cref{prop:hess-oicid} 
and $P_{\subc}$ is given in \Cref{def:projs}. 

On the other hand, the proximal operator associated with the $\ell_1$ term of~\eqref{prog:primal-oicid} is 
\begin{align}
    \label{eq:proxl1}
    \prox_{\lambda \ell_1} (Z) = \twopartdef{\mathrm{sign}(Z_{ij})(|Z_{ij}| - \eta)}{|Z_{ij}|\geq \lambda}{0}{\text{otherwise.}} 
\end{align}

\paragraph{Stopping criterion.} 
\bleu{In \Cref{alg:id-fista}, line~\ref{line:opti-b}, the stopping criterion at each iteration (with a given $\Lambda\in\dom$) is defined with respect to the optimality of the primal problem~\eqref{prog:primal-oicid}. For clarity, we rewrite the smooth and non-smooth parts of the augmented Lagrangian as follows, 
$$g(x):= \ellone{x} \quad\text{and}\quad \augl(x, y) := \augl(B,\Lambda)$$
for $x:=\text{vec}(B)$ and $y:=\text{vec}(\Lambda)$. 

Then, an iterate $x$ is optimal if $- \nabla_x \augl (x, y) \in \partial g(x)$. 
This set-valued criterion can be further expressed as the following distance, for  $u := -\nabla_x \augl (x,y)$, 
\begin{align} 
    \label{eq:deltab}
    \Delta(B):= \mathrm{dist}(u, \partial g(x) ) = \left\|
    (I-P_{\partial g(x)})(u)\right\| \leq \epsilon, 
\end{align} 
where the projection term $P_{\partial g(x)}(u)$ can be obtained through the closed-form expression of $\partial g(x)$ (Cartesian product of $\partial |x_i|$ along each of the $d^2$ dimensions). Hence the expression of $\Delta(B)$ in \eqref{eq:deltab} is: for $i=1,\dots, d^2$, 
\begin{align}
((I-P_{\partial g(x)})(u))_{i}=
\begin{cases}
    1+ u_i & \text{if~} (x_i < 0) \vee ((x_i=0) \wedge (u_i < -1)) \\
    0      & \text{if~} (x_i = 0) \wedge (|u_i| \leq 1) \\  
    -1+ u_i & \text{if~} (x_i > 0) \vee ((x_i=0) \wedge (u_i > 1)). 
\end{cases}
\end{align}
}

\bleu{
\paragraph{Selection of initial stepsizes.}
\label{parag:stepsize}
We adapt the initial stepsize $\alpha_0$ in \Cref{alg:id-fista} at each iteration through the
quadratic approximation of the exact line search objective. %
More precisely, given a descent direction $Z$, we define $\alpha_0$ as the minimizer of the
following quadratic approximation of $\augl(B+\alpha Z, \Lambda)$ along $Z$, 
        \begin{align}
            \label{eq:def-alpha0}
            \alpha_0:=\min_{\alpha>0} \alpha \braket{\nabla_B \augl(B,\Lambda), Z} + \alpha^2 \braket{Z, \nabla_B^2 \augl(B,\Lambda)[Z]},
        \end{align}
        which gives $$\alpha_0 := - \frac{\braket{\nabla_B \augl(B,\Lambda), Z}} {2 \braket{Z, \nabla_B^2 \augl(B,\Lambda)[Z]}},$$
where the two terms on the right-hand side are given in
\eqref{eq:fdiff-oicid} and \eqref{eq:zhz-oicid} respectively; see proof of
\Cref{prop:hess-oicid}. 
}
In the experiments, the computation rule of $\alpha_0$~\eqref{eq:def-alpha0} is
applied during the first $10$ FISTA iterations (\Cref{alg:id-fista}, line 3)
and then the value of $\alpha_0$ is fixed at its latest value. 
The other line search parameters in \Cref{alg:id-fista} are: %
$\gamma=\frac{1}{2}$, and maximal number of backtrackings
$\bar{n}_{ls}=20$. 

\begin{remark}[Computational cost]
    \label[remark]{rmk-app:cost-alg-primal}
The computation of $\nabla_B \augl(B,\Lambda)$ (\Cref{prop:hess-oicid}) %
costs $O(k d^2)$ floating-point operations, where $k$ 
is the maximal node degree of $\Theta$, which equals the maximal Markov blanket size of $B$ under \Cref{assp:loh14-assp1}. 

\end{remark}

\bleu{
\paragraph{Parameter values.}
\label{parag-app:param-opt}
\begin{itemize}
    \item 
For \Cref{alg:id-fista}: The tolerance parameter is set as $\epsilon=10^{-4}$. The maximal number of FISTA iterations is set as 3000 for the experiments of \Cref{ssec:exp-ev-nv}, and 1000 for the experiments of \Cref{ssec:exp-scala}.

\item 
For \Cref{alg:oicid}: The tolerance parameter of the global ALM is set as
$\epsilon=5\cdot 10^{-3}$. 
The maximal number of ALM iterations is set to 3, which turns out to be empirically sufficient in the experiments of \Cref{ssec:exp-ev-nv}. 
In the experiments of \Cref{ssec:exp-scala}, the maximal
number of ALM iterations is set as 3 for \XY\ and as 1 for %
\XX\ (in \Cref{ssec:exp-scala}). 
\revn{In the experiments of \Cref{sec:fmri}, the maximal number of ALM iterations is set as 4.} 

The initial value of $\rho$ (for the augmented Lagrangian function) is $\rho := 1$. 
In \Cref{alg:oicid}, line 3, if the relative decrease in the residual $\| \Theta - \varphi_D(B)\|$ is weaker than $0.05$, then a new primal descent attempt will start after  
incrementing the value of $\rho$ by $\rho \leftarrow 1.2 \cdot \rho$. 
\end{itemize}
}

\subsection{ICID followed by DAG projection} 
\label{sec-app:icid}

\Cref{alg:icd-loram-altmin} shows a combination of ICID with a method for computing proximal DAGs. The exponential trace function $h(\cdot)$ in \notears~\cite{NEURIPS2018_e347c514} is used in the proximal DAG computation.  

\begin{algorithm}[htpb]
    \caption{DAG-constrained ICID \label{alg:icd-loram-altmin}}
    \begin{algorithmic}[1]
        \REQUIRE{Observational data $X\in\reals^{d\times n}$ } 
\STATE Get $\Theta$ from Inverse covariance estimator (\Cref{alg:ice-emp}) 
\STATE Compute $B_0$ by \XX\ (\Cref{alg:oicid}) 

\FOR{$t=1,\dots,$}
\IF{stopping criteria$(B_{t},\tilde{B}_t)$ attained}
\STATE{return $B_{t}$}
\ENDIF
\STATE Compute proximal mappings: 
\begin{align} 
    B_{t+1} & = (1-\rho)B_0 + \rho \tilde{B}_t %
   \label{eq:icd-altmin-aa} \\ 
   \tilde{B}_{t+1} & =\prox_{\gamma_2 h} ({c_0} B_{t+1}) 
   \label{eq:icd-altmin-bb} 
\end{align}
\STATE Increment $\gamma_2$ \label{algl:increm}
\ENDFOR
    \end{algorithmic}
\end{algorithm}

In this algorithm, 
the proximal mapping~\eqref{eq:icd-altmin-bb} is defined as a solution to 
\begin{align} 
    \label{eq:icd-altmin-b} 
    \min_{B\in\dom}\quad h(B) + \frac{1}{\gamma_2}\underbrace{\fro{B - c_0B_{t+1}}^2}_{g(B; B_{t+1})},
\end{align}
where $h$ is the exponential trace-based function 
\begin{align*} 
h(B) = \trace(\exp(|B|)) 
\end{align*}
with the absolute value operation $|\cdot|$ applied to $B$ element-wisely. 

Due to the exponential trace in $h$~\citep{NEURIPS2018_e347c514}, problem~\eqref{eq:icd-altmin-b} is nonconvex. We resort to the search of one proximal point~\eqref{eq:icd-altmin-bb} satisfying sufficient decrease in $h$. %
In view of reducing the cost for computing the gradients of the exponential
trace function $h$ in~\eqref{eq:icd-altmin-bb}, the low-rank method \ourmo-AGD 
of~\cite{dong2022graphs} 
is used. 
The increment rule line~\ref{algl:increm} %
is an ad-hoc adaptation of the AMA (alternating minimization algorithm) for optimizing the Lagrangian of an equality constrained optimization.  
In the experiment on large graphs, the computation of \XX\ (line 2) is limited to one single ALM iteration. 

\subsection{An empirical inverse covariance estimator} %
\label{ssec:app-sel}

In the experiments, we use a basic empirical inverse covariance estimator for construction input matrices of \XX\ and \ogho. %

\begin{algorithm}[htpb]
    \caption{Empirical inverse covariance estimator\label{alg:ice-emp}}
    \begin{algorithmic}[1]
        \REQUIRE{Data matrix $X\in\reals^{n\times d}$, parameter $\lambda_1\in (0,1)$} 
        \ENSURE{$\widehat{\Theta}_{\lambda_1}\in\dom$} 
\STATE Compute empirical covariance and its inverse: 
\begin{align}
    \label{eq:ice-emp}
    \widehat{C} = \frac{1}{n} \trs[(X-\bar{X})] (X-\bar{X})\quad \text{and} \quad
    \widehat{\Theta} = \widehat{C}^{\dagger}, %
\end{align}
where $\widehat{C}^{\dagger}$ denotes the pseudo-inverse of $\widehat{C}$. 
\STATE Element-wise thresholding on off-diagonal entries:  
\begin{align}
     \diag(\widehat{\Theta}_{\lambda_1}) & := \diag(\widehat{\Theta}), \nonumber\\
     (\widehat{\Theta}_{\lambda_1})_{\mathrm{off}} & := \mathbb{H}( \widehat{\Theta}_{\mathrm{off}}, \lambda_1 \maxn[\widehat{\Theta}_{\mathrm{off}}]),
    \label{eq:thres}
\end{align}
where $\mathbb{H}$ is defined as 
$$
\mathbb{H}(y, \tau) =  \twopartdef{y}{|y|\geq \tau}{0}{\text{otherwise.}}$$
    \end{algorithmic}
\end{algorithm}

In the computation of~\eqref{eq:ice-emp}, the pseudo-inverse coincides with the inverse of $\widehat{C}$ when $\widehat{C}$ is positive definite (\eg, when the number $n$ of samples is sufficiently large). 
In~\eqref{eq:thres}, the subscript `off' indicates the following filtering operation
$$
\Theta_{\mathrm{off}} = \{ \Theta_{ij}: i\neq j\}
$$
where the indices of the remaining (off-diagonal) entries are preserved.

\paragraph{Selection of $\lambda_1$ for \Cref{alg:ice-emp}.}
\label{ssec-app:sel-ice-emp}

In the experiment for \Cref{fig:exp1}, a parameter selection is needed. We use grid search for selecting values of $\lambda_1$ for the empirical inverse covariance estimator (\Cref{alg:ice-emp}). Note that the total time for selecting the value of $\lambda_1$ using \Cref{alg:ice-emp} is included in the computation time of \ours\ in the benchmark of \Cref{fig:exp1}. 

We start by estimating the grid search area of $\lambda_1$, based on observational data on ER graphs with $200\leq d\leq 2.10^3$ nodes. The same methodology applies to SF graphs. 

Given that most desired causal structures have an average degree $1\leq \degr\leq 4$, the target sparsity of $\widehat{\Theta}_{\lambda_1}$ by \Cref{alg:ice-emp} is bounded by $\bar{\rho}_{\degr}=\max(\frac{\degr}{d})\approx 2.0\%$ for graphs with $d\geq 200$ nodes. 
This gives us an approximate target percentile of around $98\%$, \ie, top $2\%$ edges in terms of absolute weight of $\widehat{\Theta}_{\mathrm{off}}$. 
In other words, the maximal value $\lambda_1^{\max}$ of the grid search area is set as 
$\lambda_1^{\max} :=\frac{|\widehat{\Theta}_{\mathrm{off}}(\ttau)|}{\maxn[\widehat{\Theta}_{\mathrm{off}}]}$, 
where $\ttau$ refers to the index of the $98$-th percentile in $\{|\widehat{\Theta}_{\mathrm{off}}|\}$. %
For the experiments with ER2 graphs in \Cref{sec:exps}, the estimated $\lambda_1^{\max}$ is $6.10^{-1}$. Hence, the search grid of $\lambda_1$ is set up as $n_{I_1}=20$ equidistant values on $I_1 = [10^{-2}, 6.10^{-1}]$.

\begin{figure}[H]
  \centering
  \subfigure[Distance$(\widehat{\Theta}_{\lambda_1},\Theta^{\star})$]
  {\includegraphics[width=.24\textwidth]{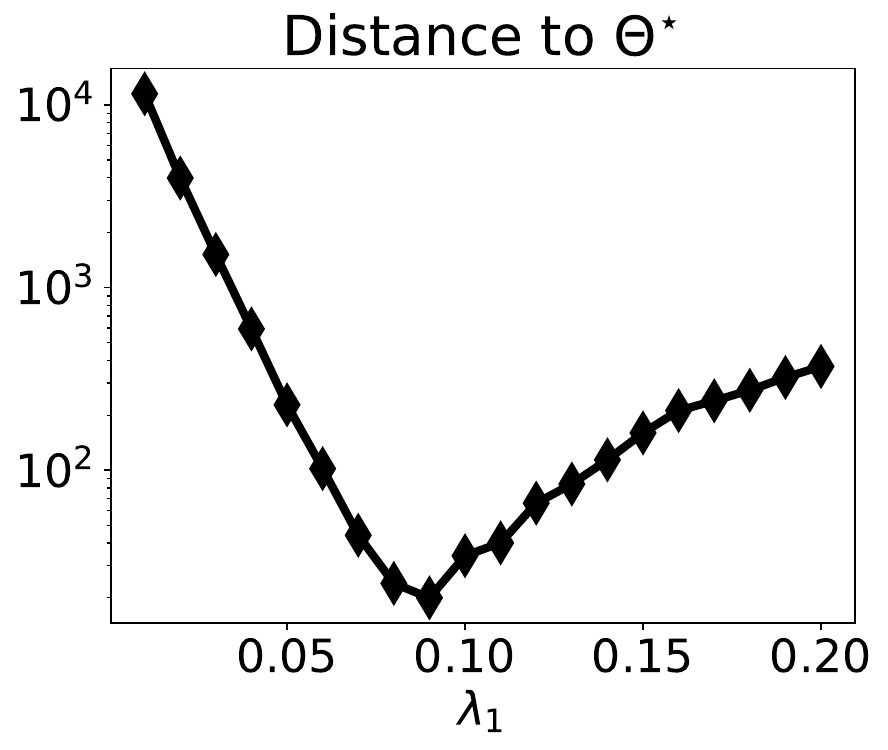}}  \qquad 
   \subfigure[Criterion used]
   {\includegraphics[width=.24\textwidth]{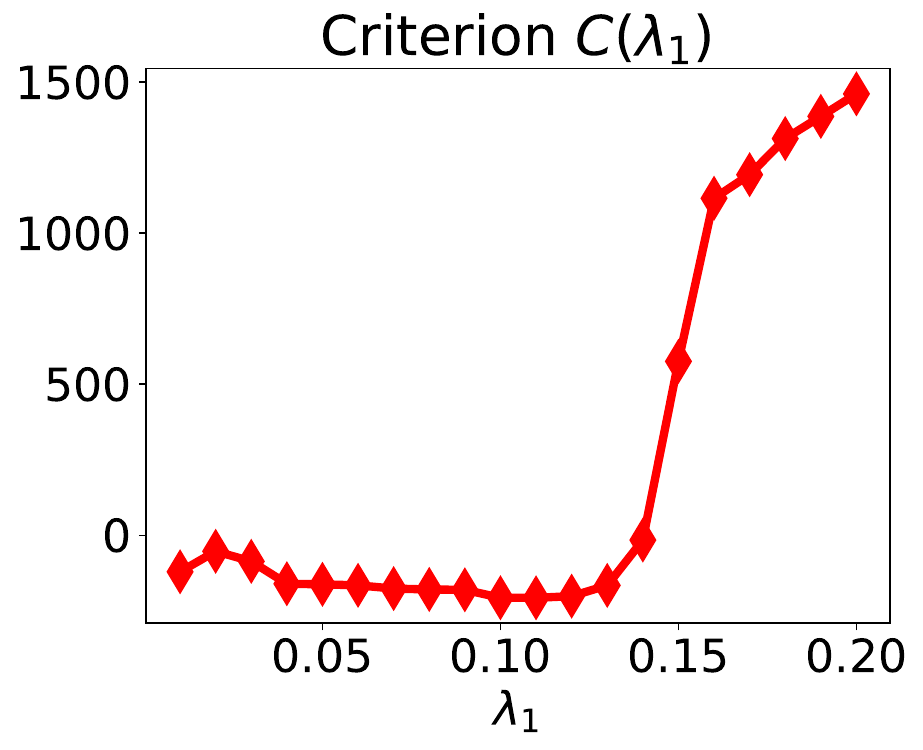}}  
    \caption{Grid search of $\lambda_1$ with \Cref{alg:ice-emp} based on criterion $C(\lambda_1)$~\eqref{eq:crit-c1}. Data $X$ is from linear \sem\ with Gaussian noise, on ER2 graph with $d=200$ nodes.}
\label{fig:app-sel-lam1}
\end{figure}

The selection criterion, similar to GraphicalLasso, is defined as 
\begin{align} \label{eq:crit-c1}
C(\lambda_1) := \trace(\widehat{C}\widehat{\Theta}_{\lambda_1}) - \log\det(\tilde{\Theta}_{\lambda_1}),
\end{align}
where $\tilde{\Theta}_{\lambda_1}  =   \widehat{\Theta}_{\lambda_1}  + \frac{9}{10} \diag(\widehat{\Theta}_{\lambda_1})$ is used in the $\log\det$-evaluation for an enhanced positive definiteness in all cases. 

\Cref{fig:app-sel-lam1} shows the criterion values compared to the Hamming distances with the oracle precision matrix $\Theta^{\star}:= \phi (B^\star)$. %
We observe that the selection criterion with $\argmin_{I_{1}} C(\lambda_1)$ gives an answer that is rather close to the optimal value in terms of distance of $\widehat{\Theta}_{\lambda_1}$ to the oracle precision matrix $\Theta^{\star}$.

\subsection{The skewness-based regularization}
\label{ssec-app:icid-reg}

The matrix $\tilde{M}$ in the regularized \XX~\eqref{prog:icid-reg}, based on third-order cumulant statistics \citep[Eq. (18)]{hyvarinen2013pairwise}, is the negative of the following matrix: 
    \begin{align}
        \label{eq:mat-cumul}
            M = C \odot \expe[\X\trs[g(\X)] -  g(\X)\trs[\X]]
    \end{align}
where $C$ denotes the correlation matrix of $\X$ and $g(\cdot)$ applies to $\X$ elementwisely. 

\paragraph{Effect of skewness on the matrix $M$.}
Suppose that the pairwise model between $x:=X_i$ and $y:=X_j$ is 
            $y = \rho x + u$ if $x\to y$ and is $ x = \rho y + u$ if $y\to x$, where $|\rho| <1$. 
The third-order cumulant-based statistic \citep[Eq. (10)]{hyvarinen2013pairwise} is defined as follows: 
\begin{align}
    \label{eq:def-rc3}
\tilde{R}_{c3} (x,y) :=  \rho \expe [ x^2 y - x y^2 ].
\end{align}
Then the $(i,j)$-th term of $M$ is 
$$ M_{ij} = C_{ij} \expe[X_i^2X_j - X_iX_j^2]  = \frac{1}{\rho} C_{ij}\tilde{R}_{c3}(x,y) $$
which is positive if $X_i \to X_j$ and otherwise is negative. 

On the other hand, the following theorem shows the connection between $\tilde{R}_{c3}$ and the causal parameter $\rho$ depending on the skewness: 
                \begin{theorem}[{\cite[Theorem 2]{hyvarinen2013pairwise}}]
                    \label{thm:h13}
                    If the causal direction is $x\to y$, then
                    $$ \tilde{R}_{c3} = \mathrm{skew}(x) (\rho^2 - \rho^3)$$
                    and if the causal direction is the opposite, then 
                    $$ \tilde{R}_{c3} = \mathrm{skew}(y) (\rho^3 - \rho^2).$$
                \end{theorem}
Therefore, when the variables of $\X$ are positively skewed, we obtain the following table about the signs of $M_{ij}$ and signs of $\rho$. 
    \begin{table}[H]
        \centering
        \label{tab:mij-reg} 
        \caption{Signs of $M_{ij}$ for different signs of $\rho$ and different pairwise causal directions. }
        \begin{tabular}{cc|ccr}
        \hline
          Direction    & $\rho (|\rho|<1)$   & $C_{ij}$ & $\tilde{R}_{3c}$ & $M_{ij}$   \\
        \hline                                               
         $x\to y$  &   $>0$              &   +   &    +   &  +  \\ 
         $x\to y$  &   $<0$              &   -   &    +   &  +  \\ 
         $x\leftarrow y$  &   $>0$       &   +   &    -   &  -  \\ 
         $x\leftarrow y$  &   $<0$       &   -   &    -   &  -  \\
        \hline
        \end{tabular}
    \end{table}
As the table above suggests, the sign of $M_{ij}$ is mostly positive if the causal direction is $(X_i \to X_j)$ and otherwise it is negative.  

This motivates us to define the regularization term of the regularized \XX~\eqref{prog:icid-reg} as 
$$ r(B) := \trace( \trs[\tilde{M}] (B\odot B)  ).$$
Since with positive skewness, the trace value $r(B)$ is small (possibly negative) when the pairwise causal directions in $B$ are consistent with the signs of the pairwise measures encoded in $M= -\tilde{M}$, and is large when the pairwise directions in $B$ are reversed. 

\subsection{Parameter selection for \XX\ and \XSkew} 

We optimize \XSkew, the regularized \XX~\eqref{prog:icid-reg}, using the same algorithm (\Cref{alg:oicid}) as \XX. Computationally, \XSkew\ differs with \XX\ only in the additional regularization function $r(B)$, which is subsumed in the proposed FISTA subroutine (\Cref{alg:id-fista}) according to \Cref{prop:hess-oicid}. 

Both \XX\ and \XSkew\ uses as input the estimated inverse covariance matrix by \Cref{alg:ice-emp}, which has a selection rule for the parameter $\lambda_1$ given a set values; see \Cref{ssec-app:sel-ice-emp}. 
In the experiment on sythetic data, the set for selection is composed of the 10 linearly equal distant values $\lambda_1 \in [0.04, 0.1]$. 
In the experiment on the fMRI datasets, the set for selection is 
10 linearly equal distant values of $\lambda_1 \in [0.01, 0.1]$.

Subsequently, both \XX\ and \XSkew\ are optimized by \Cref{alg:oicid} for a parameter $\rho>0$, which determines the initial weight of the $\ell_1$ term in the augmented Lagrangian. In practice, we used a small set of values $\lambda'_1:=\frac{1}{\rho} \in \{0.05, 0.1,0.2,0.3\}$ for the experiments on synthetic data (\Cref{ssec:exp-scala}), and  
$\lambda'_1:=\frac{1}{\rho} \in \{0.01, 0.05, 0.06, 0.07, 0.08, 0.1, 0.2\}$ for the experiments on the fMRI datasets. 

The one additional regularization parameter $\lambda_2$ for \XSkew\ is also selected by a simple grid search, given a fixed pair of preselected parameters $(\lambda_1, \rho)$ of \XX. 
The grid search result for $\lambda_2$ on the fMRI Sim4 data is as follows: 

\begin{table}[htpb]
\centering
\caption{Grid search for the regularization parameter $\lambda_2$ of \XSkew.
$\lambda_1$ is the parameter for sparse inverse covariance estimation (\Cref{alg:ice-emp}) and $\lambda'_1:= 1/\rho$ is the ALM parameter in \Cref{alg:oicid}. 
    \label{tab:exp-fmri-icidreg}}
\begin{tabular}{c|c|ccccc}
\hline
{\it fMRI Sim4}    & ($\lambda_1$, $\lambda'_1$, $\lambda_2$) & SHD & TPR & FDR & nnz  &  Primal optimality \\ 
\hline                                               
\XX\  & $(0.0256, 0.05, 0.00 )$          &  14  & 0.820 &  0.219  &  64  &  1.1e-5   \\   
\XX\ (skew) & $(0.0256, 0.05, 0.01 )$    &  18  & 0.787 &  0.273  &  66  &  2.1e-5   \\
\XX\ (skew) & $(0.0256, 0.05, 0.02 )$    &  12  & 0.934 &  0.173  &  69  &  6.2e-5   \\
\XX\ (skew) & $(0.0256, 0.05, 0.03 )$    &   7  & 0.967 &  0.106  &  66  &  2.3e-5   \\
\XX\ (skew) & $(0.0256, 0.05, 0.05 )$    &   5  & 0.984 &  0.076  &  65  &  9.2e-5   \\
\XX\ (skew) & $(0.0256, 0.05, 0.09 )$    &   6  & 0.984 &  0.090  &  66  &  5.0e-6   \\
\hline
\end{tabular}
\end{table}

The learning accuracy of the estimated graphs during the grid search are visualized in the following ROC curves.
        \begin{figure}[H]
            \centering
            \includegraphics[width=.76\textwidth]{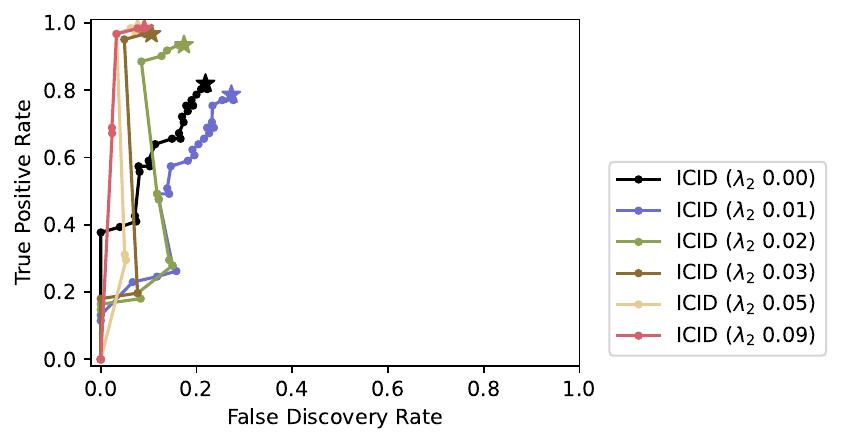} 
            \caption{ROC curves of \XSkew\ with different values of $\lambda_2$ on the fMRI Sim4 dataset.\label{fig:roc-icidreg}. The x-axis is the FDR scores instead of FPRs.} 
        \end{figure}

\section{Experiments}

The codes of the proposed algorithms are available at 
https://anonymous.4open.science/r/icid-v2-9611.

\subsection{Random graphs and synthetic data}
\label{ssec:data-graphs}
The experiments on synthetic data are conducted with DAGs generated from two sets of random graphs: (i) Erd\H{o}s--R\'enyi (ER) graphs and (ii) Scale-free (SF)~\citep{barabasi1999emergence} graphs, as characterized in \Cref{tab:graph-types}. %
\begin{table}[!htbp]
\small
\centering 
\caption{Features of different graph models.}
\label{tab:graph-types}
\begin{tabular}{c|cc}
\hline
                    & Parameter     & Degree distribution   \\ \hline
                    Erd\H{o}s-R\'enyi   & $p\in (0,1)$   & Binomial $\mathcal{B}(d,p)$          \\  
Scale-free          & $\gamma$       & $P(k)\propto k^{-\gamma}$ \\
\hline
\end{tabular}
\end{table}

The generation of random DAGs from the two sets above is the same as in~\citep{NEURIPS2018_e347c514}. The naming of these graphs has a node degree specification, such as `ER1', where the number indicates the average node degree of the graph. 
Specifically, for a given DAG $\grp^\star$, 
its weighted adjacency matrix $B^\star$ is generated by assigning weights to the nonzeros of $\mathbb{B}(\grp^\star)\in\{0,1\}^{d\times d}$ independently from the uniform distribution: 
$B^\star_{ij}\sim \text{Unif}([-2,-0.5]\cup [0.5, 2]), \text{~for~} (i,j)\in \suppo(\mathbb{B}(\grp^\star))$. 
We generate observational data according to the linear \sem\ model~\eqref{eq:sem}, and store them in dataset $X\in\reals^{n\times d}$ where $n$ is the number of samples. The additive noises of the linear \sem\, such that
$X = \trs[{B^\star}] X + E$, %
belong to either of the following models: (i) Gaussian noise ($\gau$): $E \sim\mathcal{N}(0, \signoise)$ %
and (ii) Exponential noise ($\expp$): $E\sim \text{Exp}(\signoise)$, %
where $\signoise$ is a diagonal matrix of noise variances. 
Therefore, the dataset $X$ belongs to one of the following categories $\{\er\degr, \sff\degr\}\times \{\gau, \expp\}$ (where $\degr$ is the aforementioned average node degree).

\subsection{Evaluation Metrics} 
\label{ssec-app:metrics}

The graph metrics for the comparison of graph edge sets are the commonly used (\eg, by the aforementioned baseline methods) ones as follows: \\ 
(1) TPR = TP/T                      \quad (higher is better), \\ 
(2)  FDR = (R + FP)/P           \quad  (lower is better),  \\ 
(3) FPR = (R + FP)/F            \quad  (lower is better),  \\ 
(4) SHD = E +M + R              \quad (lower is better). \\  
More precisely, SHD is the (minimal) total number of edge additions (E), deletions (M), and reversals (R) needed to convert an estimated DAG into a true DAG. 
Since a pair of directed graphs are compared, a distinction between True Positives (TP) and Reversed edges (R) is needed: the former is estimated with correct direction
whereas the latter is not. Likewise, a False Positive (FP) is an edge that is not in the undirected skeleton of the true graph. In addition, Positive (P) is the set of estimated edges, True (T) is the set of true edges, False (F) is the set of non-edges in the ground truth graph. Finally, let (E) be the extra
edges from the skeleton, (M) be the missing edges from the skeleton.

\subsection{Causal structure learning from inverse covariance matrices} %
\label{ssec-app:exp-ev-nv}

The results of the experiment in \Cref{ssec:exp-ev-nv} are also represented in \Cref{tab:std-interp} and \Cref{tab:std-interp-er3}. 

\begin{table}[htpb]
\small 
\centering
\caption{Results of causal structure learning from inverse covariance matrices in the EV case ($\lambda=0$) and the NV cases ($\lambda>0$). The scaling parameter $\lambda$ is defined in \eqref{eq:def-xlam}. \label{tab:std-interp}}
\vspace{1mm}
\begin{tabular}{l|c|c||cccc}
\hline
G  & $\lambda$                                                                              & Algorithm    &  TPR             &  SHD                &  Median SHD   &  time (seconds)               \\ 
\hline
\multicolumn{1}{c|}{\multirow{15}{*}{ER1}} 	& \multicolumn{1}{c|}{\multirow{3}{*}{0.00}}    & \ogho\   & 1.000 {\scriptsize$\pm$ 0.000}    	& 0.000 {\scriptsize$\pm$ 0.000}    	& 0.000        	& 0.006 {\scriptsize$\pm$ 0.003}  \\ \cline{3-7}         	
\multicolumn{1}{c|}{                     } 	& \multicolumn{1}{c|}{                     } 	& GES   & 0.894 {\scriptsize$\pm$ 0.053}    	& 31.700 {\scriptsize$\pm$ 7.103}    	& 29.000        	& 386.155 {\scriptsize$\pm$ 56.533}  \\ \cline{3-7}
\multicolumn{1}{c|}{                     } 	& \multicolumn{1}{c|}{                     } 	& \XY\    & 0.922 {\scriptsize$\pm$ 0.057}    	& 9.800 {\scriptsize$\pm$ 5.350}    	& 10.500        	& 1.616 {\scriptsize$\pm$ 0.771}  \\ \cline{2-7}
\multicolumn{1}{c|}{                     } 	& \multicolumn{1}{c|}{\multirow{3}{*}{0.10}}  	& \ogho\   & 0.976 {\scriptsize$\pm$ 0.026}    	& 3.500 {\scriptsize$\pm$ 4.378}    	& 2.000        	& 0.006 {\scriptsize$\pm$ 0.003}  \\ \cline{3-7}
\multicolumn{1}{c|}{                     } 	& \multicolumn{1}{c|}{                     }  	& GES   & 0.894 {\scriptsize$\pm$ 0.053}    	& 31.700 {\scriptsize$\pm$ 7.103}    	& 29.000        	& 269.355 {\scriptsize$\pm$ 80.731}  \\ \cline{3-7}
\multicolumn{1}{c|}{                     } 	& \multicolumn{1}{c|}{                     }  	& \XY\    & 0.958 {\scriptsize$\pm$ 0.047}    	& 4.900 {\scriptsize$\pm$ 5.446}    	& 4.000        	& 1.336 {\scriptsize$\pm$ 0.431}  \\ \cline{2-7}
\multicolumn{1}{c|}{                     } 	& \multicolumn{1}{c|}{\multirow{3}{*}{0.20}}  	& \ogho\   & 0.896 {\scriptsize$\pm$ 0.071}    	& 13.900 {\scriptsize$\pm$ 9.433}    	& 11.500        	& 0.005 {\scriptsize$\pm$ 0.006}  \\ \cline{3-7}
\multicolumn{1}{c|}{                     } 	& \multicolumn{1}{c|}{                     }  	& GES   & 0.894 {\scriptsize$\pm$ 0.053}    	& 31.700 {\scriptsize$\pm$ 7.103}    	& 29.000        	& 223.234 {\scriptsize$\pm$ 65.922}  \\ \cline{3-7}
\multicolumn{1}{c|}{                     } 	& \multicolumn{1}{c|}{                     }  	& \XY\    & 0.970 {\scriptsize$\pm$ 0.057}    	& 2.900 {\scriptsize$\pm$ 5.953}    	& 0.000        	& 1.321 {\scriptsize$\pm$ 0.345}  \\ \cline{2-7}
\multicolumn{1}{c|}{                     } 	& \multicolumn{1}{c|}{\multirow{3}{*}{0.40}}  	& \ogho\   & 0.736 {\scriptsize$\pm$ 0.082}    	& 36.000 {\scriptsize$\pm$ 13.174}    	& 33.000        	& 0.004 {\scriptsize$\pm$ 0.001}  \\ \cline{3-7}
\multicolumn{1}{c|}{                     } 	& \multicolumn{1}{c|}{                     }  	& GES   & 0.894 {\scriptsize$\pm$ 0.053}    	& 31.700 {\scriptsize$\pm$ 7.103}    	& 29.000        	& 202.746 {\scriptsize$\pm$ 44.534}  \\ \cline{3-7}
\multicolumn{1}{c|}{                     } 	& \multicolumn{1}{c|}{                     }  	& \XY\    & 0.996 {\scriptsize$\pm$ 0.013}    	& 0.500 {\scriptsize$\pm$ 1.581}    	& 0.000        	& 1.741 {\scriptsize$\pm$ 0.243}  \\ \cline{2-7}
\multicolumn{1}{c|}{                     } 	& \multicolumn{1}{c|}{\multirow{3}{*}{0.80}}  	& \ogho\   & 0.514 {\scriptsize$\pm$ 0.071}    	& 65.500 {\scriptsize$\pm$ 12.457}    	& 62.500        	& 0.004 {\scriptsize$\pm$ 0.001}  \\ \cline{3-7}
\multicolumn{1}{c|}{                     } 	& \multicolumn{1}{c|}{                     }  	& GES   & 0.894 {\scriptsize$\pm$ 0.053}    	& 31.700 {\scriptsize$\pm$ 7.103}    	& 29.000        	& 183.254 {\scriptsize$\pm$ 28.068}  \\ \cline{3-7}
\multicolumn{1}{c|}{                     } 	& \multicolumn{1}{c|}{                     }  	& \XY\    & 1.000 {\scriptsize$\pm$ 0.000}    	& 0.000 {\scriptsize$\pm$ 0.000}    	& 0.000        	& 1.452 {\scriptsize$\pm$ 0.424}  \\ \cline{2-7}
\multicolumn{1}{c|}{                     } 	& \multicolumn{1}{c|}{\multirow{3}{*}{1.00}}  	& \ogho\   & 0.414 {\scriptsize$\pm$ 0.087}    	& 82.600 {\scriptsize$\pm$ 17.424}    	& 81.500        	& 0.005 {\scriptsize$\pm$ 0.003}  \\ \cline{3-7}
\multicolumn{1}{c|}{                     } 	& \multicolumn{1}{c|}{                     }  	& GES   & 0.894 {\scriptsize$\pm$ 0.053}    	& 31.700 {\scriptsize$\pm$ 7.103}    	& 29.000        	& 181.916 {\scriptsize$\pm$ 28.672}  \\ \cline{3-7}
\multicolumn{1}{c|}{                     } 	& \multicolumn{1}{c|}{                     }  	& \XY\    & 0.992 {\scriptsize$\pm$ 0.017}    	& 0.400 {\scriptsize$\pm$ 0.843}    	& 0.000        	& 1.251 {\scriptsize$\pm$ 0.282}  \\ \cline{2-7}
\hline
\multicolumn{1}{c|}{\multirow{15}{*}{ER2}} 	& \multicolumn{1}{c|}{\multirow{3}{*}{0.00}}     & \ogho\   & 1.000 {\scriptsize$\pm$ 0.000}    	& 0.000 {\scriptsize$\pm$ 0.000}    	& 0.000        	& 0.004 {\scriptsize$\pm$ 0.000}  \\ \cline{3-7}         
\multicolumn{1}{c|}{                     } 	& \multicolumn{1}{c|}{                     } 	 & GES   & 0.828 {\scriptsize$\pm$ 0.082}    	& 73.889 {\scriptsize$\pm$ 31.700}    	& 69.000        	& 1076.755 {\scriptsize$\pm$ 1063.638}  \\ \cline{3-7}
\multicolumn{1}{c|}{                     } 	& \multicolumn{1}{c|}{                     } 	 & \XY\    & 0.827 {\scriptsize$\pm$ 0.113}    	& 43.700 {\scriptsize$\pm$ 23.443}    	& 44.000        	& 1.651 {\scriptsize$\pm$ 0.860}  \\ \cline{2-7}
\multicolumn{1}{c|}{                     } 	& \multicolumn{1}{c|}{\multirow{3}{*}{0.10}}  	 & \ogho\   & 0.912 {\scriptsize$\pm$ 0.044}    	& 49.700 {\scriptsize$\pm$ 36.999}    	& 32.500        	& 0.004 {\scriptsize$\pm$ 0.000}  \\ \cline{3-7}
\multicolumn{1}{c|}{                     } 	& \multicolumn{1}{c|}{                     }  	 & GES   & 0.828 {\scriptsize$\pm$ 0.082}    	& 73.889 {\scriptsize$\pm$ 31.700}    	& 69.000        	& 1074.745 {\scriptsize$\pm$ 1060.711}  \\ \cline{3-7}
\multicolumn{1}{c|}{                     } 	& \multicolumn{1}{c|}{                     }  	 & \XY\    & 0.920 {\scriptsize$\pm$ 0.048}    	& 28.500 {\scriptsize$\pm$ 19.196}    	& 31.000        	& 1.605 {\scriptsize$\pm$ 0.888}  \\ \cline{2-7}
\multicolumn{1}{c|}{                     } 	& \multicolumn{1}{c|}{\multirow{3}{*}{0.20}}  	 & \ogho\   & 0.794 {\scriptsize$\pm$ 0.058}    	& 107.300 {\scriptsize$\pm$ 24.622}    	& 109.500        	& 0.005 {\scriptsize$\pm$ 0.003}  \\ \cline{3-7}
\multicolumn{1}{c|}{                     } 	& \multicolumn{1}{c|}{                     }  	 & GES   & 0.828 {\scriptsize$\pm$ 0.082}    	& 73.889 {\scriptsize$\pm$ 31.700}    	& 69.000        	& 1074.792 {\scriptsize$\pm$ 1059.921}  \\ \cline{3-7}
\multicolumn{1}{c|}{                     } 	& \multicolumn{1}{c|}{                     }  	 & \XY\    & 0.911 {\scriptsize$\pm$ 0.101}    	& 23.500 {\scriptsize$\pm$ 23.904}    	& 20.000        	& 1.945 {\scriptsize$\pm$ 0.941}  \\ \cline{2-7}
\multicolumn{1}{c|}{                     } 	& \multicolumn{1}{c|}{\multirow{3}{*}{0.40}}  	 & \ogho\   & 0.585 {\scriptsize$\pm$ 0.074}    	& 202.000 {\scriptsize$\pm$ 40.877}    	& 215.000        	& 0.004 {\scriptsize$\pm$ 0.001}  \\ \cline{3-7}
\multicolumn{1}{c|}{                     } 	& \multicolumn{1}{c|}{                     }  	 & GES   & 0.828 {\scriptsize$\pm$ 0.082}    	& 73.889 {\scriptsize$\pm$ 31.700}    	& 69.000        	& 1074.323 {\scriptsize$\pm$ 1060.670}  \\ \cline{3-7}
\multicolumn{1}{c|}{                     } 	& \multicolumn{1}{c|}{                     }  	 & \XY\    & 0.964 {\scriptsize$\pm$ 0.082}    	& 12.400 {\scriptsize$\pm$ 23.081}    	& 2.000        	& 2.512 {\scriptsize$\pm$ 1.227}  \\ \cline{2-7}
\multicolumn{1}{c|}{                     } 	& \multicolumn{1}{c|}{\multirow{3}{*}{0.80}}  	 & \ogho\   & 0.320 {\scriptsize$\pm$ 0.036}    	& 362.500 {\scriptsize$\pm$ 36.035}    	& 377.500        	& 0.004 {\scriptsize$\pm$ 0.002}  \\ \cline{3-7}
\multicolumn{1}{c|}{                     } 	& \multicolumn{1}{c|}{                     }  	 & GES   & 0.828 {\scriptsize$\pm$ 0.082}    	& 73.889 {\scriptsize$\pm$ 31.700}    	& 69.000        	& 1074.660 {\scriptsize$\pm$ 1061.174}  \\ \cline{3-7}
\multicolumn{1}{c|}{                     } 	& \multicolumn{1}{c|}{                     }  	 & \XY\    & 0.940 {\scriptsize$\pm$ 0.048}    	& 18.800 {\scriptsize$\pm$ 17.364}    	& 10.000        	& 3.092 {\scriptsize$\pm$ 0.586}  \\ \cline{2-7}
\multicolumn{1}{c|}{                     } 	& \multicolumn{1}{c|}{\multirow{3}{*}{1.00}}  	 & \ogho\   & 0.263 {\scriptsize$\pm$ 0.035}    	& 412.400 {\scriptsize$\pm$ 32.925}    	& 412.000        	& 0.004 {\scriptsize$\pm$ 0.000}  \\ \cline{3-7}
\multicolumn{1}{c|}{                     } 	& \multicolumn{1}{c|}{                     }  	 & GES   & 0.828 {\scriptsize$\pm$ 0.082}    	& 73.889 {\scriptsize$\pm$ 31.700}    	& 69.000        	& 1073.877 {\scriptsize$\pm$ 1060.595}  \\ \cline{3-7}
\multicolumn{1}{c|}{                     } 	& \multicolumn{1}{c|}{                     }  	 & \XY\    & 0.884 {\scriptsize$\pm$ 0.072}    	& 25.900 {\scriptsize$\pm$ 15.409}    	& 26.500        	& 3.248 {\scriptsize$\pm$ 0.824}  \\ \cline{2-7}
\hline
\end{tabular}
\end{table}

\begin{table}[H]
\small 
\centering
\caption{Results of causal structure learning from inverse covariance matrices in the EV case ($\lambda=0$) and the NV cases ($\lambda>0$). The scaling parameter $\lambda$ is defined in \eqref{eq:def-xlam}. \label{tab:std-interp-er3}} 
\vspace{1mm}
\begin{tabular}{l|c|c||cccc}
\hline
G  & $\lambda$                                                                              & Algorithm    &  TPR             &  SHD                &  Median SHD   &  time (seconds)               \\ 
\hline
\multicolumn{1}{c|}{\multirow{15}{*}{ER3}} 	& \multicolumn{1}{c|}{\multirow{3}{*}{0.00}} 	& \ogho\   & 1.000 {\scriptsize$\pm$ 0.000}    	& 0.000 {\scriptsize$\pm$ 0.000}    	& 0.000        	& 0.004 {\scriptsize$\pm$ 0.000}  \\ \cline{3-7}          
\multicolumn{1}{c|}{                     } 	& \multicolumn{1}{c|}{                     } 	& GES   & 0.663 {\scriptsize$\pm$ 0.042}    	& 215.000 {\scriptsize$\pm$ 53.740}    	& 215.000        	& 72466.820 {\scriptsize$\pm$ 10675.060}  \\ \cline{3-7}
\multicolumn{1}{c|}{                     } 	& \multicolumn{1}{c|}{                     }    & \XY\    & 0.733 {\scriptsize$\pm$ 0.114}    	& 109.600 {\scriptsize$\pm$ 44.890}    	& 106.500        	& 2.526 {\scriptsize$\pm$ 0.582}  \\ \cline{2-7}
\multicolumn{1}{c|}{                     } 	& \multicolumn{1}{c|}{\multirow{3}{*}{0.10}}  	& \ogho\   & 0.945 {\scriptsize$\pm$ 0.030}    	& 67.800 {\scriptsize$\pm$ 51.352}    	& 47.000        	& 0.006 {\scriptsize$\pm$ 0.004}  \\ \cline{3-7}
\multicolumn{1}{c|}{                     } 	& \multicolumn{1}{c|}{                     }  	& fastGES   & 0.802 {\scriptsize$\pm$ 0.064}    	& 314.200 {\scriptsize$\pm$ 41.590}    	& 315.500        	& 2.527 {\scriptsize$\pm$ 0.432}  \\ \cline{3-7}
\multicolumn{1}{c|}{                     } 	& \multicolumn{1}{c|}{                     }    & \XY\    & 0.867 {\scriptsize$\pm$ 0.067}    	& 65.000 {\scriptsize$\pm$ 33.113}    	& 57.500        	& 2.133 {\scriptsize$\pm$ 0.791}  \\ \cline{2-7}  	
\multicolumn{1}{c|}{                     } 	& \multicolumn{1}{c|}{\multirow{3}{*}{0.20}}  	& \ogho\   & 0.768 {\scriptsize$\pm$ 0.094}    	& 236.900 {\scriptsize$\pm$ 80.733}    	& 242.500        	& 0.005 {\scriptsize$\pm$ 0.002}  \\ \cline{3-7}
\multicolumn{1}{c|}{                     } 	& \multicolumn{1}{c|}{                     }  	& fastGES   & 0.802 {\scriptsize$\pm$ 0.064}    	& 314.200 {\scriptsize$\pm$ 41.590}    	& 315.500        	& 2.378 {\scriptsize$\pm$ 0.475}  \\ \cline{3-7}
\multicolumn{1}{c|}{                     } 	& \multicolumn{1}{c|}{                     }    & \XY\    & 0.849 {\scriptsize$\pm$ 0.117}    	& 76.600 {\scriptsize$\pm$ 58.043}    	& 62.000        	& 2.761 {\scriptsize$\pm$ 1.151}  \\ \cline{2-7}  	
\multicolumn{1}{c|}{                     } 	& \multicolumn{1}{c|}{\multirow{3}{*}{0.40}}  	& \ogho\   & 0.501 {\scriptsize$\pm$ 0.068}    	& 424.200 {\scriptsize$\pm$ 66.406}    	& 411.500        	& 0.005 {\scriptsize$\pm$ 0.003}  \\ \cline{3-7}
\multicolumn{1}{c|}{                     } 	& \multicolumn{1}{c|}{                     }  	& fastGES   & 0.802 {\scriptsize$\pm$ 0.064}    	& 314.200 {\scriptsize$\pm$ 41.590}    	& 315.500        	& 2.432 {\scriptsize$\pm$ 0.637}  \\ \cline{3-7}
\multicolumn{1}{c|}{                     } 	& \multicolumn{1}{c|}{                     }    & \XY\    & 0.910 {\scriptsize$\pm$ 0.078}    	& 66.400 {\scriptsize$\pm$ 55.191}    	& 53.000        	& 3.415 {\scriptsize$\pm$ 0.509}  \\ \cline{2-7}  	
\multicolumn{1}{c|}{                     } 	& \multicolumn{1}{c|}{\multirow{3}{*}{0.80}}  	& \ogho\   & 0.229 {\scriptsize$\pm$ 0.034}    	& 641.100 {\scriptsize$\pm$ 65.467}    	& 635.500        	& 0.005 {\scriptsize$\pm$ 0.003}  \\ \cline{3-7}
\multicolumn{1}{c|}{                     } 	& \multicolumn{1}{c|}{                     }  	& fastGES   & 0.802 {\scriptsize$\pm$ 0.064}    	& 314.200 {\scriptsize$\pm$ 41.590}    	& 315.500        	& 2.462 {\scriptsize$\pm$ 0.531}  \\ \cline{3-7}
\multicolumn{1}{c|}{                     } 	& \multicolumn{1}{c|}{                     }    & \XY\    & 0.862 {\scriptsize$\pm$ 0.082}    	& 93.400 {\scriptsize$\pm$ 29.444}    	& 94.500        	& 3.950 {\scriptsize$\pm$ 0.645}  \\ \cline{2-7}  	
\multicolumn{1}{c|}{                     } 	& \multicolumn{1}{c|}{\multirow{3}{*}{1.00}}  	& \ogho\   & 0.176 {\scriptsize$\pm$ 0.040}    	& 703.400 {\scriptsize$\pm$ 54.271}    	& 703.000        	& 0.005 {\scriptsize$\pm$ 0.003}  \\ \cline{3-7}
\multicolumn{1}{c|}{                     } 	& \multicolumn{1}{c|}{                     }  	& fastGES   & 0.802 {\scriptsize$\pm$ 0.064}    	& 314.200 {\scriptsize$\pm$ 41.590}    	& 315.500        	& 2.382 {\scriptsize$\pm$ 0.439}  \\ \cline{3-7}
\multicolumn{1}{c|}{                     } 	& \multicolumn{1}{c|}{                     }    & \XY\    & 0.705 {\scriptsize$\pm$ 0.153}    	& 88.100 {\scriptsize$\pm$ 22.293}    	& 86.500        	& 3.627 {\scriptsize$\pm$ 0.943}  \\ \cline{2-7}  	
\hline
\end{tabular}
\end{table}

In addition to the results on ER1 and ER3 sets (\Cref{fig:oicid-std-interp-flexD-v-other2-upd}), the following figure (\Cref{fig:oicid-std-interp-flexD-v-other2-upd-er2}) shows the results on the ER2 set. 
\begin{figure}[H]
    \centering
    \subfigure[ER2]{\includegraphics[width=0.68\textwidth]{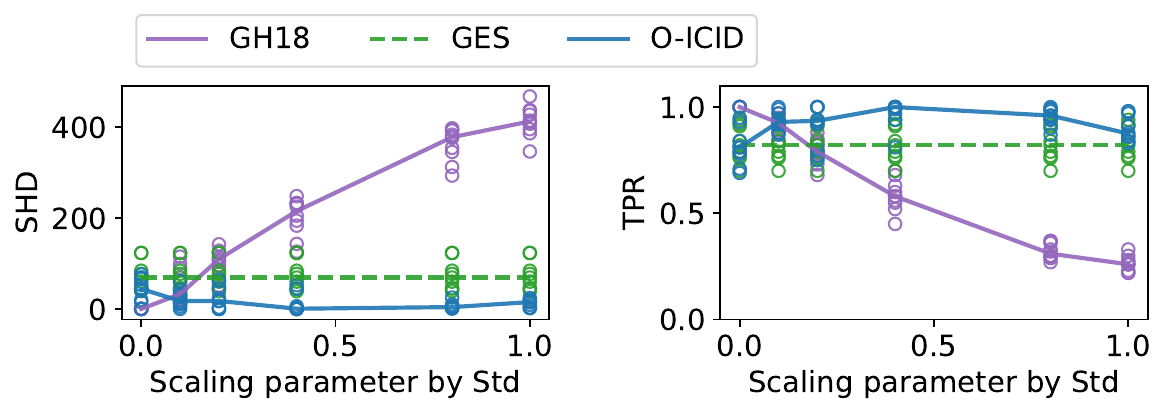}}
    \caption{Causal structure learning by \XY\ compared to \ogho\ and GES. The
    X-axis indicates the scaling parameter $\lambda\in[0,1]$ in
    \eqref{eq:def-xlam}. The true DAGs are drawn from the
    ER2 set with $d=50$ nodes.}
    \label{fig:oicid-std-interp-flexD-v-other2-upd-er2} 
\end{figure}

\paragraph{Same experiment on SF graphs.} 

The experiment of \Cref{ssec:exp-ev-nv} is also conducted with random DAGs drawn from the SF sets. The results are presented in 
\Cref{fig:oicid-std-interp-flexD-v-other2-upd-sf} and \Cref{tab:std-interp-sf}.  In particular, GES is tested only with $\lambda=0$ (due to running time limits), and its scores for the other values of $\lambda$ remain the same as $\lambda=0$ in view of the invariance of GES to the value of $\lambda$ (note that $\lambda$ only controls the scaling of each $X_i$'s samples). Therefore the running time of GES is only reported for $\lambda=0$ in \Cref{tab:std-interp-sf}. 

\begin{figure}[H]
    \centering
    \subfigure[SF2]{\includegraphics[width=0.74\textwidth]{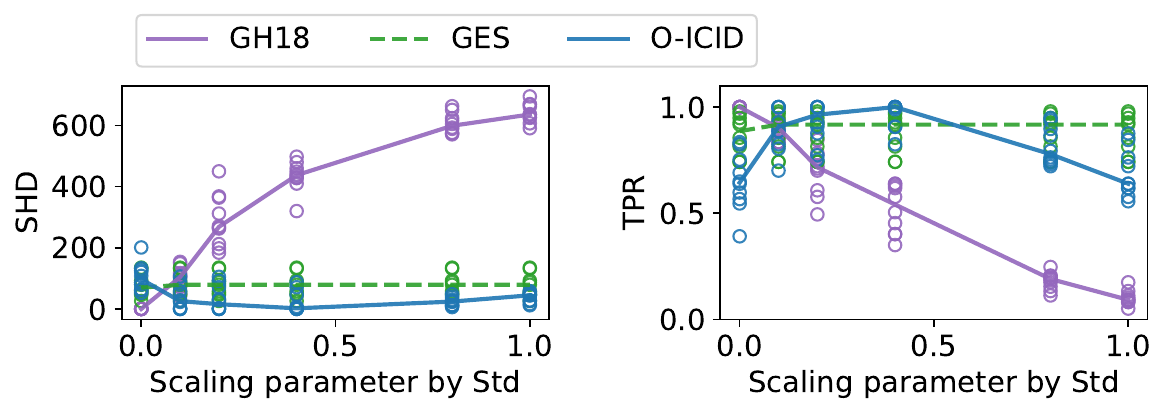}}
    \subfigure[SF4]{\includegraphics[width=0.74\textwidth]{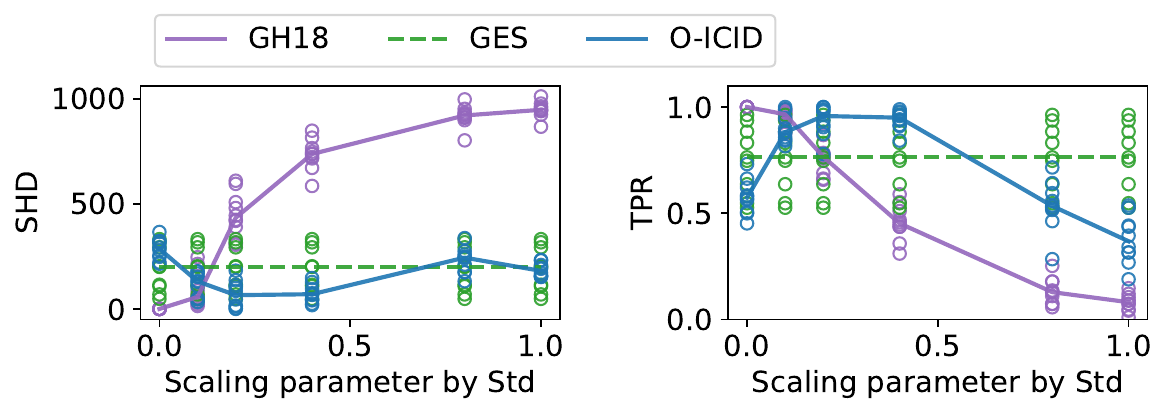}}
    \caption{Causal structure learning by \XY\ compared to \ogho\ and GES. The
    X-axis indicates $\lambda\in[0,1]$ in
    \eqref{eq:def-xlam}. The true DAGs are drawn from the
    SF2 and SF4 sets with $d=50$ nodes.}
    \label{fig:oicid-std-interp-flexD-v-other2-upd-sf} 
\end{figure}

\begin{table}[H]
\footnotesize 
\centering
\caption{Results of causal structure learning from inverse covariance matrices in the EV case ($\lambda=0$) and the NV cases ($\lambda>0$). The scaling parameter $\lambda$ is defined in \eqref{eq:def-xlam}. The true DAGs are drawn from the SF sets. In particular, GES is tested only with $\lambda=0$. \label{tab:std-interp-sf}} 
\vspace{1mm}
\begin{tabular}{l|c|c||cccc}
\hline
G  & $\lambda$                                                                              & Algorithm    &  TPR             &  SHD                &  Median SHD   &  time (seconds)               \\ 
\hline
\multicolumn{1}{c|}{\multirow{15}{*}{SF2}} 	& \multicolumn{1}{c|}{\multirow{3}{*}{0.00}}     & \ogho\   & 1.000 {\scriptsize$\pm$ 0.000}    	& 0.000 {\scriptsize$\pm$ 0.000}    	& 0.000        	& 0.005 {\scriptsize$\pm$ 0.001}  \\ \cline{3-7}          
\multicolumn{1}{c|}{                     } 	& \multicolumn{1}{c|}{                     } 	 & GES   & 0.877 {\scriptsize$\pm$ 0.093}    	& 78.556 {\scriptsize$\pm$ 37.627}    	& 79.000        	& 742.761 {\scriptsize$\pm$ 281.321}  \\ \cline{3-7}
\multicolumn{1}{c|}{                     } 	& \multicolumn{1}{c|}{                     } 	 & \XY\    & 0.649 {\scriptsize$\pm$ 0.135}    	& 103.300 {\scriptsize$\pm$ 44.697}    	& 97.000        	& 10.445 {\scriptsize$\pm$ 1.617}  \\ \cline{2-7}
\multicolumn{1}{c|}{                     } 	& \multicolumn{1}{c|}{\multirow{3}{*}{0.10}}  	 & \ogho\   & 0.897 {\scriptsize$\pm$ 0.045}    	& 97.600 {\scriptsize$\pm$ 38.390}    	& 104.000        	& 0.004 {\scriptsize$\pm$ 0.000}  \\ \cline{3-7}
\multicolumn{1}{c|}{                     } 	& \multicolumn{1}{c|}{                     }  	 & GES   & 0.877 {\scriptsize$\pm$ 0.093}    	& 78.556 {\scriptsize$\pm$ 37.627}    	& 79.000        	& NA  \\ \cline{3-7}
\multicolumn{1}{c|}{                     } 	& \multicolumn{1}{c|}{                     }  	 & \XY\    & 0.903 {\scriptsize$\pm$ 0.103}    	& 38.400 {\scriptsize$\pm$ 41.097}    	& 25.500        	& 8.523 {\scriptsize$\pm$ 3.172}  \\ \cline{2-7}
\multicolumn{1}{c|}{                     } 	& \multicolumn{1}{c|}{\multirow{3}{*}{0.20}}  	 & \ogho\   & 0.696 {\scriptsize$\pm$ 0.106}    	& 288.800 {\scriptsize$\pm$ 85.036}    	& 268.500        	& 0.003 {\scriptsize$\pm$ 0.000}  \\ \cline{3-7}
\multicolumn{1}{c|}{                     } 	& \multicolumn{1}{c|}{                     }  	 & GES   & 0.877 {\scriptsize$\pm$ 0.093}    	& 78.556 {\scriptsize$\pm$ 37.627}    	& 79.000        	& NA  \\ \cline{3-7}
\multicolumn{1}{c|}{                     } 	& \multicolumn{1}{c|}{                     }  	 & \XY\    & 0.924 {\scriptsize$\pm$ 0.096}    	& 33.200 {\scriptsize$\pm$ 41.335}    	& 15.500        	& 6.136 {\scriptsize$\pm$ 3.583}  \\ \cline{2-7}
\multicolumn{1}{c|}{                     } 	& \multicolumn{1}{c|}{\multirow{3}{*}{0.40}}  	 & \ogho\   & 0.522 {\scriptsize$\pm$ 0.113}    	& 434.900 {\scriptsize$\pm$ 48.045}    	& 437.500        	& 0.003 {\scriptsize$\pm$ 0.000}  \\ \cline{3-7}
\multicolumn{1}{c|}{                     } 	& \multicolumn{1}{c|}{                     }  	 & GES   & 0.877 {\scriptsize$\pm$ 0.093}    	& 78.556 {\scriptsize$\pm$ 37.627}    	& 79.000        	& NA  \\ \cline{3-7}
\multicolumn{1}{c|}{                     } 	& \multicolumn{1}{c|}{                     }  	 & \XY\    & 0.963 {\scriptsize$\pm$ 0.062}    	& 21.600 {\scriptsize$\pm$ 33.533}    	& 2.000        	& 6.801 {\scriptsize$\pm$ 3.615}  \\ \cline{2-7}
\multicolumn{1}{c|}{                     } 	& \multicolumn{1}{c|}{\multirow{3}{*}{0.80}}  	 & \ogho\   & 0.181 {\scriptsize$\pm$ 0.039}    	& 606.200 {\scriptsize$\pm$ 32.155}    	& 599.000        	& 0.003 {\scriptsize$\pm$ 0.000}  \\ \cline{3-7}
\multicolumn{1}{c|}{                     } 	& \multicolumn{1}{c|}{                     }  	 & GES   & 0.877 {\scriptsize$\pm$ 0.093}    	& 78.556 {\scriptsize$\pm$ 37.627}    	& 79.000        	& NA  \\ \cline{3-7}
\multicolumn{1}{c|}{                     } 	& \multicolumn{1}{c|}{                     }  	 & \XY\    & 0.815 {\scriptsize$\pm$ 0.088}    	& 23.300 {\scriptsize$\pm$ 13.483}    	& 24.000        	& 8.691 {\scriptsize$\pm$ 2.169}  \\ \cline{2-7}
\multicolumn{1}{c|}{                     } 	& \multicolumn{1}{c|}{\multirow{3}{*}{1.00}}  	 & \ogho\   & 0.102 {\scriptsize$\pm$ 0.033}    	& 641.600 {\scriptsize$\pm$ 31.697}    	& 636.000        	& 0.003 {\scriptsize$\pm$ 0.000}  \\ \cline{3-7}
\multicolumn{1}{c|}{                     } 	& \multicolumn{1}{c|}{                     }  	 & GES   & 0.877 {\scriptsize$\pm$ 0.093}    	& 78.556 {\scriptsize$\pm$ 37.627}    	& 79.000        	& NA  \\ \cline{3-7}
\multicolumn{1}{c|}{                     } 	& \multicolumn{1}{c|}{                     }  	 & \XY\    & 0.686 {\scriptsize$\pm$ 0.111}    	& 37.800 {\scriptsize$\pm$ 16.075}    	& 44.500        	& 8.517 {\scriptsize$\pm$ 2.748}  \\ \cline{2-7}
\hline                                                                                                                                                                                                                                                                 
\multicolumn{1}{c|}{\multirow{15}{*}{SF4}} 	& \multicolumn{1}{c|}{\multirow{3}{*}{0.00}}     & \ogho\   & 1.000 {\scriptsize$\pm$ 0.000}    	& 0.000 {\scriptsize$\pm$ 0.000}    	& 0.000        	& 0.003 {\scriptsize$\pm$ 0.000}  \\ \cline{3-7}
\multicolumn{1}{c|}{                     } 	& \multicolumn{1}{c|}{                     } 	 & GES   & 0.760 {\scriptsize$\pm$ 0.161}    	& 187.222 {\scriptsize$\pm$ 108.545}    	& 200.000        	& 2407.557 {\scriptsize$\pm$ 1876.750}  \\ \cline{3-7}
\multicolumn{1}{c|}{                     } 	& \multicolumn{1}{c|}{                     } 	 & \XY\    & 0.577 {\scriptsize$\pm$ 0.082}    	& 283.300 {\scriptsize$\pm$ 48.847}    	& 286.000        	& 9.755 {\scriptsize$\pm$ 2.319}  \\ \cline{2-7}
\multicolumn{1}{c|}{                     } 	& \multicolumn{1}{c|}{\multirow{3}{*}{0.10}}  	 & \ogho\   & 0.951 {\scriptsize$\pm$ 0.041}    	& 93.100 {\scriptsize$\pm$ 82.332}    	& 57.000        	& 0.003 {\scriptsize$\pm$ 0.000}  \\ \cline{3-7}
\multicolumn{1}{c|}{                     } 	& \multicolumn{1}{c|}{                     }  	 & GES   & 0.760 {\scriptsize$\pm$ 0.161}    	& 187.222 {\scriptsize$\pm$ 108.545}    	& 200.000        	& NA  \\ \cline{3-7}
\multicolumn{1}{c|}{                     } 	& \multicolumn{1}{c|}{                     }  	 & \XY\    & 0.896 {\scriptsize$\pm$ 0.063}    	& 117.700 {\scriptsize$\pm$ 56.210}    	& 131.000        	& 10.094 {\scriptsize$\pm$ 1.770}  \\ \cline{2-7}
\multicolumn{1}{c|}{                     } 	& \multicolumn{1}{c|}{\multirow{3}{*}{0.20}}  	 & \ogho\   & 0.751 {\scriptsize$\pm$ 0.064}    	& 448.700 {\scriptsize$\pm$ 103.395}    	& 434.000        	& 0.003 {\scriptsize$\pm$ 0.000}  \\ \cline{3-7}
\multicolumn{1}{c|}{                     } 	& \multicolumn{1}{c|}{                     }  	 & GES   & 0.760 {\scriptsize$\pm$ 0.161}    	& 187.222 {\scriptsize$\pm$ 108.545}    	& 200.000        	&  NA                                     \\ \cline{3-7}
\multicolumn{1}{c|}{                     } 	& \multicolumn{1}{c|}{                     }  	 & \XY\    & 0.936 {\scriptsize$\pm$ 0.071}    	& 69.400 {\scriptsize$\pm$ 63.533}    	& 65.500        	& 9.070 {\scriptsize$\pm$ 3.105}  \\ \cline{2-7}
\multicolumn{1}{c|}{                     } 	& \multicolumn{1}{c|}{\multirow{3}{*}{0.40}}  	 & \ogho\   & 0.457 {\scriptsize$\pm$ 0.083}    	& 734.500 {\scriptsize$\pm$ 72.529}    	& 737.000        	& 0.003 {\scriptsize$\pm$ 0.000}  \\ \cline{3-7}
\multicolumn{1}{c|}{                     } 	& \multicolumn{1}{c|}{                     }  	 & GES   & 0.760 {\scriptsize$\pm$ 0.161}    	& 187.222 {\scriptsize$\pm$ 108.545}    	& 200.000        	&  NA                                    \\ \cline{3-7}
\multicolumn{1}{c|}{                     } 	& \multicolumn{1}{c|}{                     }  	 & \XY\    & 0.941 {\scriptsize$\pm$ 0.043}    	& 71.100 {\scriptsize$\pm$ 42.344}    	& 70.000        	& 9.676 {\scriptsize$\pm$ 2.179}  \\ \cline{2-7}
\multicolumn{1}{c|}{                     } 	& \multicolumn{1}{c|}{\multirow{3}{*}{0.80}}  	 & \ogho\   & 0.132 {\scriptsize$\pm$ 0.059}    	& 917.300 {\scriptsize$\pm$ 49.218}    	& 920.500        	& 0.003 {\scriptsize$\pm$ 0.001}  \\ \cline{3-7}
\multicolumn{1}{c|}{                     } 	& \multicolumn{1}{c|}{                     }  	 & GES   & 0.760 {\scriptsize$\pm$ 0.161}    	& 187.222 {\scriptsize$\pm$ 108.545}    	& 200.000        	& NA                                      \\ \cline{3-7}
\multicolumn{1}{c|}{                     } 	& \multicolumn{1}{c|}{                     }  	 & \XY\    & 0.536 {\scriptsize$\pm$ 0.114}    	& 230.000 {\scriptsize$\pm$ 58.752}    	& 244.000        	& 10.670 {\scriptsize$\pm$ 3.277}  \\ \cline{2-7}
\multicolumn{1}{c|}{                     } 	& \multicolumn{1}{c|}{\multirow{3}{*}{1.00}}  	 & \ogho\   & 0.080 {\scriptsize$\pm$ 0.040}    	& 947.900 {\scriptsize$\pm$ 37.063}    	& 947.500        	& 0.003 {\scriptsize$\pm$ 0.000}  \\ \cline{3-7}
\multicolumn{1}{c|}{                     } 	& \multicolumn{1}{c|}{                     }  	 & GES   & 0.760 {\scriptsize$\pm$ 0.161}    	& 187.222 {\scriptsize$\pm$ 108.545}    	& 200.000        	&  NA                                    \\ \cline{3-7}
\multicolumn{1}{c|}{                     } 	& \multicolumn{1}{c|}{                     }  	 & \XY\    & 0.377 {\scriptsize$\pm$ 0.111}    	& 185.300 {\scriptsize$\pm$ 31.245}    	& 181.000        	& 9.124 {\scriptsize$\pm$ 1.707}  \\ \cline{2-7}
\hline
\end{tabular}
\end{table}

\paragraph{Misspecification of noise variances.} 

The following figures show the sensitivity of \XY\ under misspecification of noise variances in all tested configurations: the random DAGs are drawn from ER$k$ sets with $k\in\{1,2,3\}$, and the scaling parameter $\lambda$ via standardization ranges in $\{0.1, 0.2, 0.4, 0.8, 1.0\}$. 

\begin{figure}[H]
    \centering
    \subfigure[ER1, $\lambda=0.1$]{\includegraphics[width=0.490\textwidth]{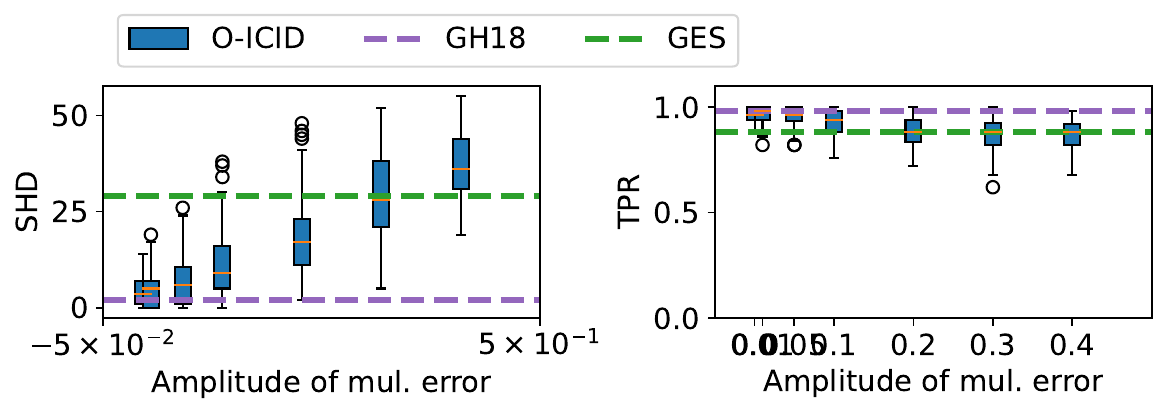}} 
    \subfigure[ER1, $\lambda=0.2$]{\includegraphics[width=0.490\textwidth]{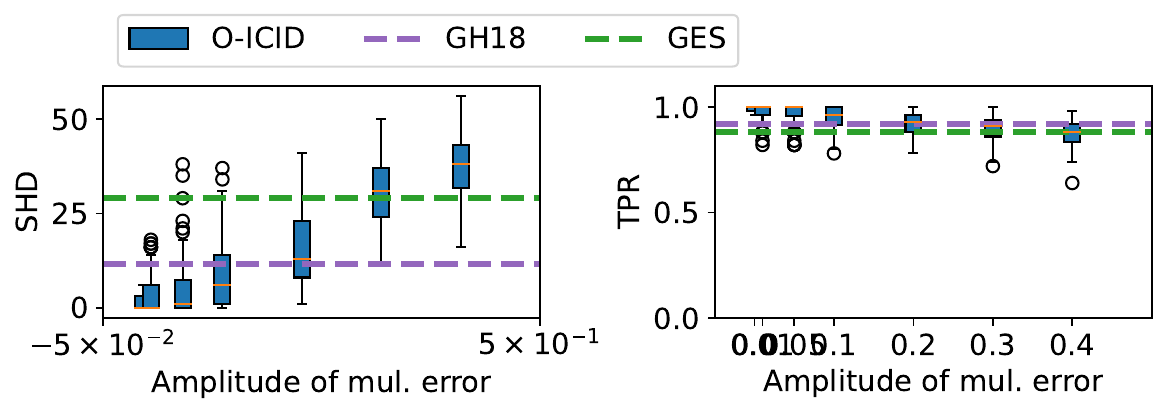}} 
    \\
    \subfigure[ER1, $\lambda=0.4$]{\includegraphics[width=0.490\textwidth]{figures/exp/oicid_v3exp7_sens_dstar/scat7_iset4.pdf}} 
    \subfigure[ER1, $\lambda=0.8$]{\includegraphics[width=0.490\textwidth]{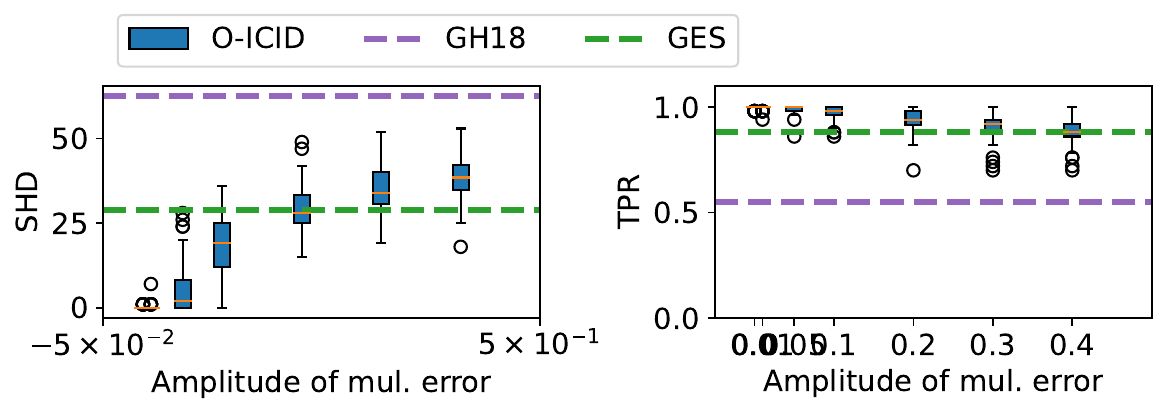}} 
    \\ 
    \subfigure[ER2, $\lambda=0.1$]{\includegraphics[width=0.490\textwidth]{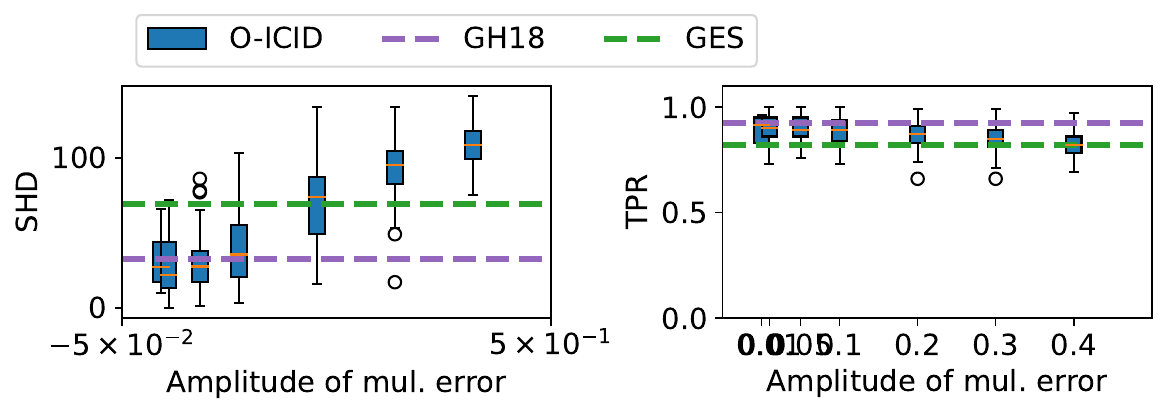}} 
    \subfigure[ER2, $\lambda=0.2$]{\includegraphics[width=0.490\textwidth]{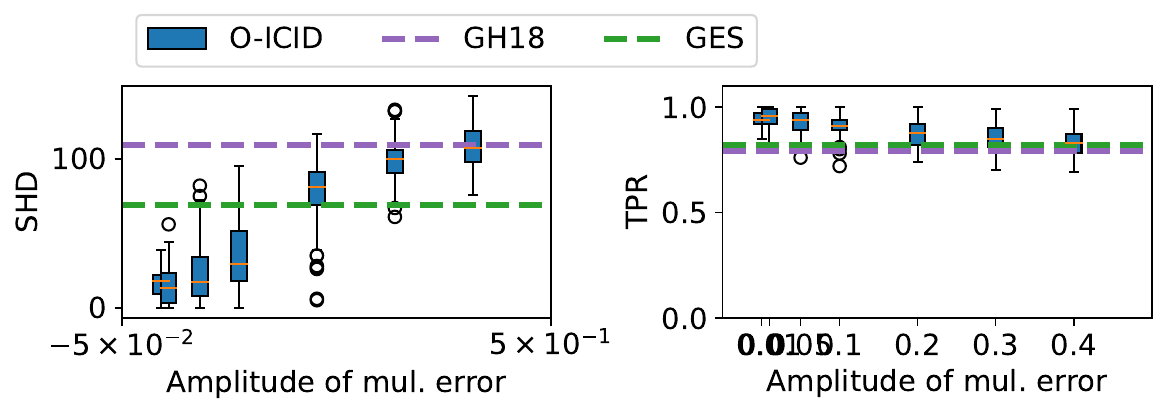}} 
    \\
    \subfigure[ER2, $\lambda=0.4$]{\includegraphics[width=0.490\textwidth]{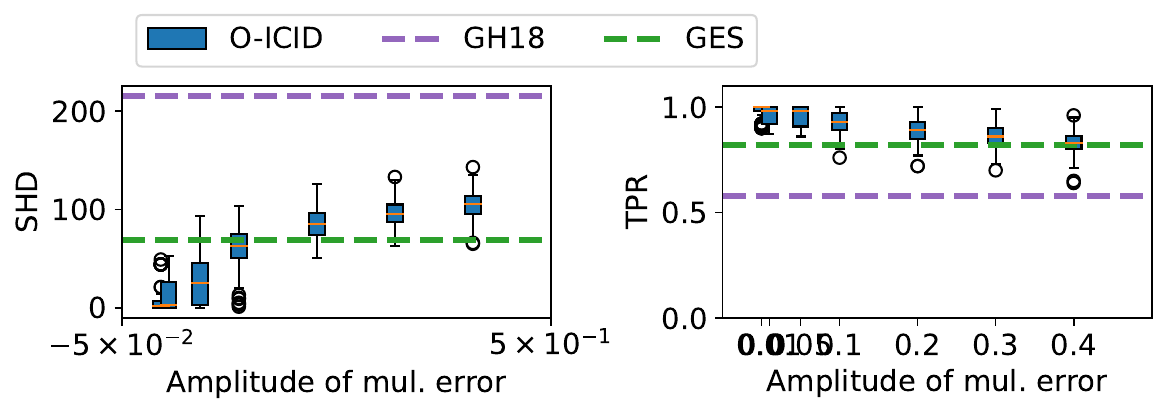}} 
    \subfigure[ER2, $\lambda=0.8$]{\includegraphics[width=0.490\textwidth]{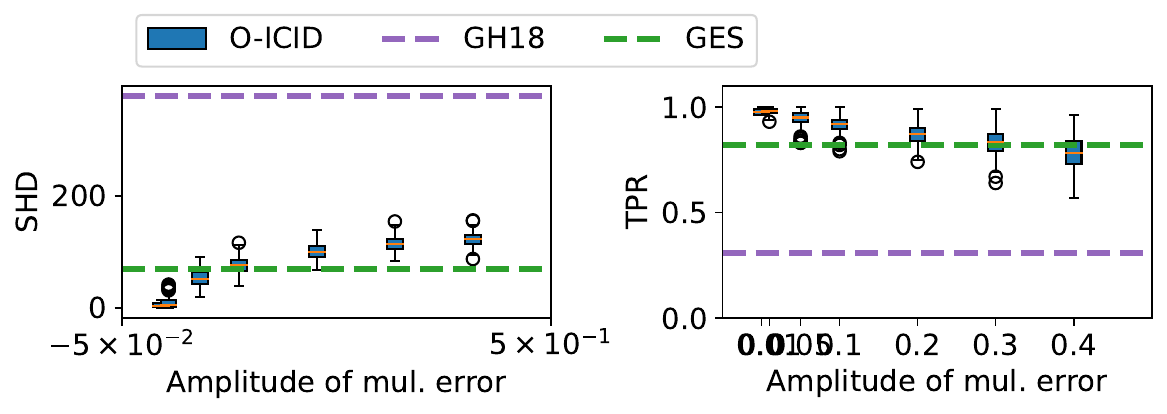}} 
    \caption{Causal structure learning by \XY. The X-axis indicates the
    amplitude $\sigma_{\epsilon}$ of average misspecifications of~$D^*$.\label{fig:oicid-std-interp-Dpert-2} }
\end{figure}

\begin{figure}[H]
    \centering
    \subfigure[ER3, $\lambda=0.1$]{\includegraphics[width=0.490\textwidth]{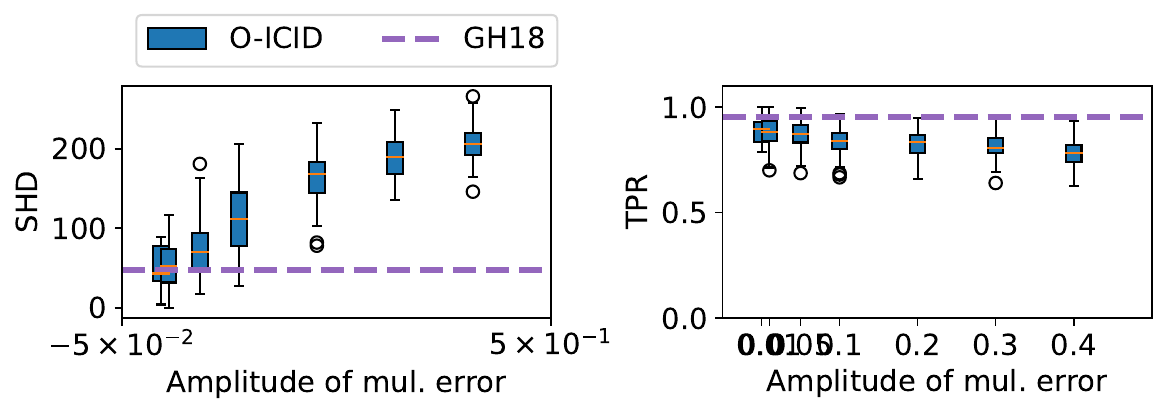}} 
    \subfigure[ER3, $\lambda=0.2$]{\includegraphics[width=0.490\textwidth]{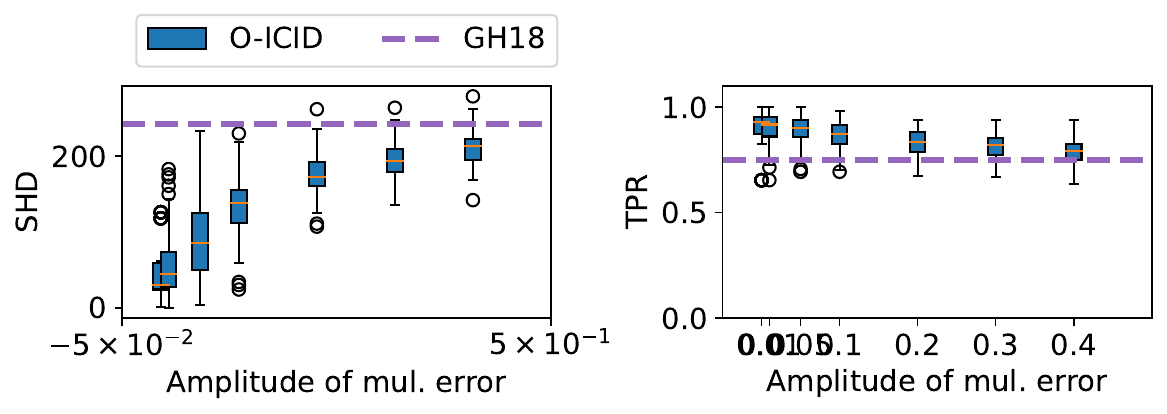}} 
    \\
    \subfigure[ER3, $\lambda=0.4$]{\includegraphics[width=0.490\textwidth]{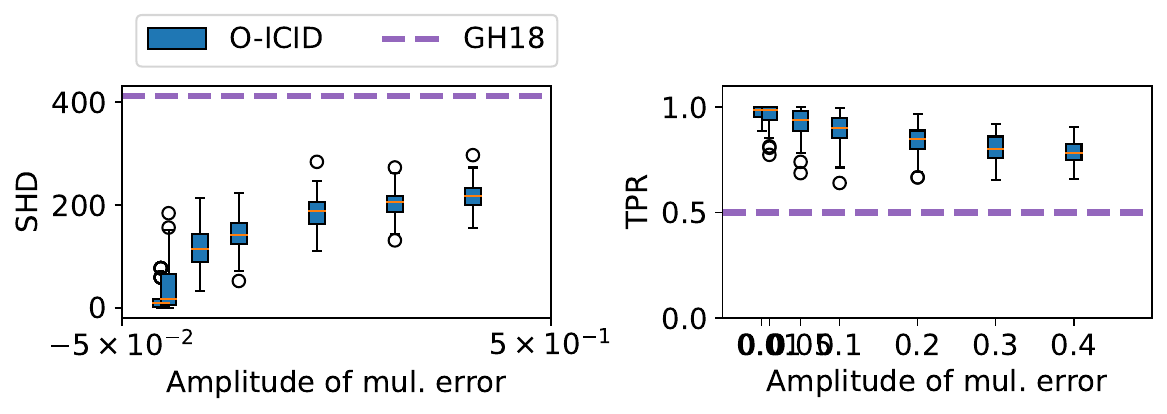}} 
    \subfigure[ER3, $\lambda=0.8$]{\includegraphics[width=0.490\textwidth]{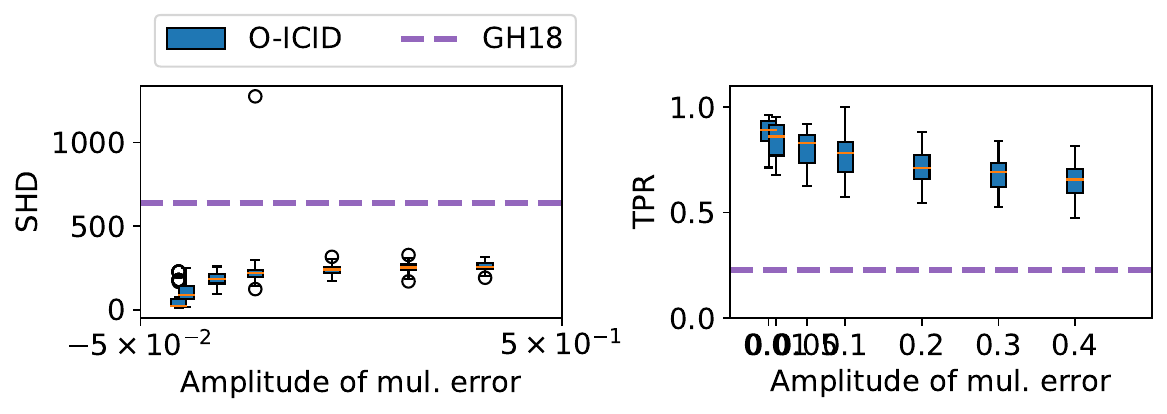}} 
    \\
    \subfigure[ER3, $\lambda=1$]{\includegraphics[width=0.490\textwidth]{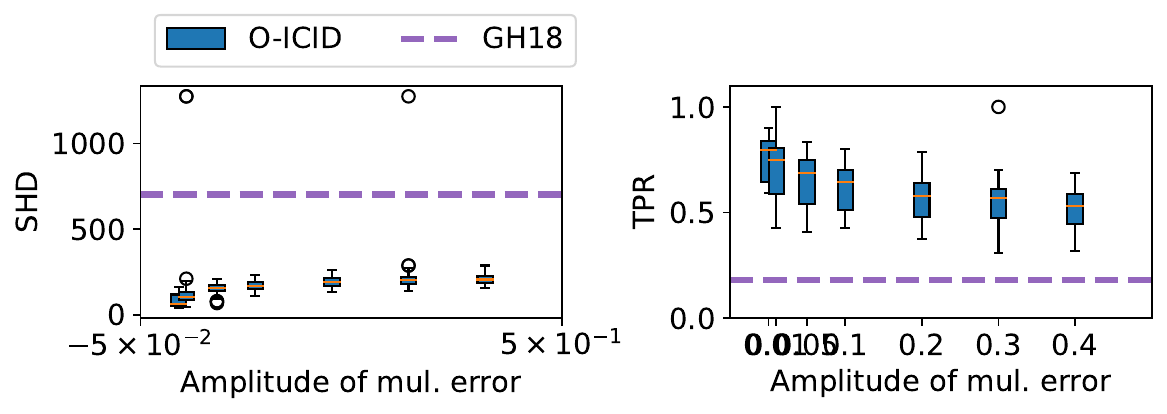}} 
    \caption{Causal structure learning by \XY. The X-axis indicates the
    amplitude $\sigma_{\epsilon}$ of average misspecifications of~$D^*$.}
    \label{fig:oicid-std-interp-Dpert-3} 
\end{figure}

\begin{figure}[H]
    \centering
    \subfigure[ER1, $\lambda=1$]{\includegraphics[width=0.490\textwidth]{figures/exp/oicid_v3exp7_sens_dstar/scat7_iset6.pdf}} 
    \subfigure[ER2, $\lambda=1$]{\includegraphics[width=0.490\textwidth]{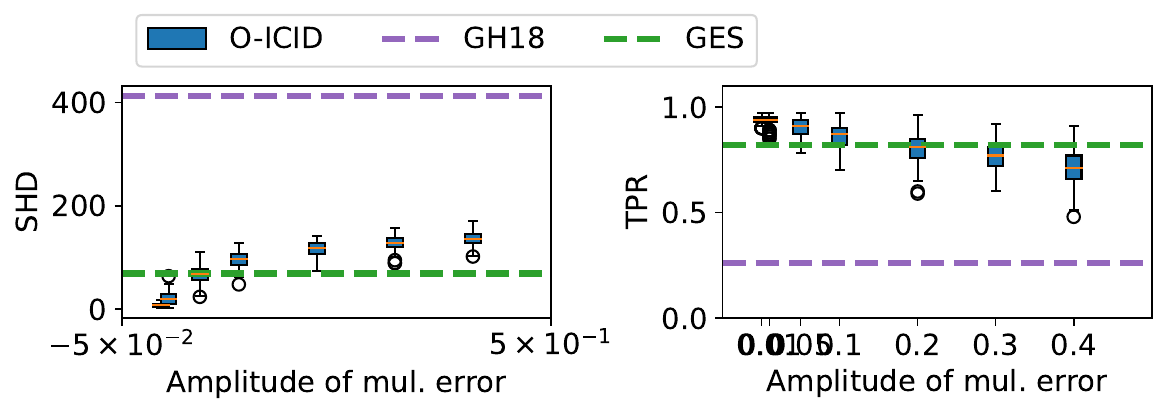}} 
    \caption{Causal structure learning by \XY. The X-axis indicates the
    amplitude $\sigma_{\epsilon}$ of average misspecifications of~$D^*$.}
    \label{fig:oicid-std-interp-Dpert-reste} 
\end{figure}

%
\bibliographystyle{plainnat}
\bibliography{refs-all}

\end{document}